\documentclass[twoside,11pt]{article}
\usepackage{jair, theapa, rawfonts}

\jairheading{63}{2018}{461-494}{4/18}{11/18}
\ShortHeadings{Proximal Gradient TD Algorithms}{Liu, Gemp, Ghavamzadeh, Liu, Mahadevan, \& Petrik}
\firstpageno{461}
\usepackage{times}
\usepackage{epsfig}
\usepackage{algorithm}
\usepackage{algorithmic}
\usepackage{subfig}
\usepackage{color}
\usepackage{amsmath, amsfonts, amssymb, amsthm}
\usepackage{mathtools}
\usepackage{mathrsfs}
\mathtoolsset{showonlyrefs}
\usepackage{tabularx}
\newcounter{thm_counter}
\newcounter{lem_counter}
\newcounter{pro_counter}
\newcounter{ass_counter}

\newtheorem{theorem}[thm_counter]{Theorem}
\newtheorem{proposition}[pro_counter]{Proposition}
\newtheorem{lemma}[lem_counter]{Lemma}
\newtheorem{assumption}[ass_counter]{Assumption}
\newtheorem{definition}[ass_counter]{Definition}




\newcommand{\rc}{\color{black}}


\newcommand{\negt}[1]{[#1]_-}
\newcommand{\pos}[1]{[#1]_+}
\usepackage[inline,ignoremode]{trackchanges}

\addeditor{Place_Holder} 
\addeditor{mgh}

\addeditor{lb}

\title{{\rc Proximal Gradient Temporal Difference Learning:\\ Stable Reinforcement Learning with Polynomial Sample Complexity}}

\author{\name Bo Liu \email boliu@auburn.edu \\
      \addr Auburn University\\
      Auburn, AL 36849, USA
      \AND
      \name Ian Gemp \email imgemp@cs.umass.edu \\
      \addr UMass Amherst\\
      Amherst, MA 01002, USA
      \AND
      \name Mohammad Ghavamzadeh \email mgh@fb.com \\
      \addr Facebook AI Research \\ Menlo Park, CA 94025, USA
      \AND 
      \name Ji Liu \email jliu@cs.rochester.edu \\
      \addr University of Rochester\\
      Rochester, NY 14627, USA
      \AND 
      \name Sridhar Mahadevan \email mahadeva@cs.umass.edu \\
      \addr UMass Amherst\\
      Amherst, MA 01002, USA
      \AND
      \name Marek Petrik \email mpetrik@cs.unh.edu\\
      \addr University of New Hampshire \\ Durham, NH 03824, USA
}

\begin{document}
\maketitle

\begin{abstract} 
In this paper, we introduce proximal gradient temporal difference learning, which provides a principled way of designing and analyzing true stochastic gradient temporal difference learning algorithms. We show how gradient TD (GTD) reinforcement learning methods can be formally derived,  not by starting from their original objective functions, as previously attempted, but rather from a primal-dual saddle-point objective function. We also conduct a saddle-point error analysis to obtain finite-sample bounds on their performance. Previous analyses of this class of algorithms use stochastic approximation techniques to prove asymptotic convergence, and do not provide any finite-sample analysis. We also propose an accelerated algorithm, called GTD2-MP, that uses proximal ``mirror maps'' to yield an improved convergence rate. The results of our theoretical analysis imply that the GTD family of algorithms are comparable and may indeed be preferred over existing least squares TD methods for off-policy learning, due to their linear complexity. We provide experimental results showing the improved performance of our accelerated gradient TD methods.
\end{abstract} 


\section{Introduction}

Obtaining a true stochastic gradient temporal difference method has been a longstanding goal of reinforcement learning (RL) for almost three decades~\cite{ndp:book,sutton-barto:book} ever since it was discovered that the original TD method was unstable in many off-policy applications, such as Q-learning, where the target behavior being learned and the exploratory behavior producing samples differ. \citeauthor{Sutton:GTD1:2008}~\citeyear{Sutton:GTD1:2008} and~\citeauthor{tdc:2009}~\citeyear{tdc:2009} proposed a family of gradient-based temporal difference (GTD) algorithms, which yielded several interesting properties. A key property of this class of GTD algorithms is that they are asymptotically convergent in the off-policy setting. However, the original derivation of these methods was somewhat ad-hoc, as the derivation from the original loss functions involved some non-mathematical steps (such as an arbitrary decomposition of the resulting product of gradient terms). Consequently, the resulting convergence analysis was also weakened, and limited to showing the asymptotic convergence using stochastic approximation~\cite{borkar:book}. Despite these shortcomings, gradient TD was a significant advance,  as previous work on off-policy methods, such as TD($\lambda$), do not have convergence guarantees in the off-policy setting. A further appealing property of these algorithms is the first-order computational complexity that allows them to scale more gracefully to high-dimensional problems, unlike the widely used least-squares TD (LSTD) approaches~\cite{bradtke-barto:LSTD} that only perform well with moderate size reinforcement learning (RL) problems, due to their quadratic (w.r.t.~the dimension of the feature space) computational cost per iteration. 

Unfortunately, despite the nomenclature, GTD algorithms are {\em not true stochastic gradient methods with respect to their original objective functions}, as pointed out by~\citeauthor{szepesvari2010algorithms}~\citeyear{szepesvari2010algorithms}. The reason is not surprising: the gradient of the objective function involves products of terms, which cannot be sampled directly. Consequently,  their original derivation involved a rather ad-hoc splitting of terms, which was justified more or less from intuition. In this paper, we take a major step forward in resolving this problem by showing a principled way of designing true stochastic gradient TD algorithms by using a primal-dual saddle point objective function, derived from the original objective functions, coupled with the powerful machinery of {\em operator splitting}~\cite{BOOK2011PROXSPLIT}. A significant advantage of our approach is that it enables undertaking a precise {\em finite sample analysis} of convergence, which provides deeper insight into the actual running times of the GTD methods beyond the standard asymptotic analysis. 

Since in real-world applications of RL, we have access to only a finite amount of data, finite-sample analysis of gradient TD algorithms is essential as it clearly shows the effect of the number of samples (and the parameters that play a role in the sampling budget of the algorithm) {on} their final performance. However, most of the work on the finite-sample analysis in RL has been focused on  batch RL (or approximate dynamic programming) algorithms 
 (e.g.,~\citeR{Kakade02AO,Munos08FT,antos08learning,Lazaric10AC}), especially those that are  least squares TD (LSTD)-based (e.g.,~\citeR{Lazaric_finite-sampleanalysis,lstdrp:nips2010,LASSOTD:2011,Lazaric12FS}), and more importantly restricted to  the on-policy setting. In this paper, we provide the finite-sample analysis of the GTD family of algorithms, a relatively novel class of gradient-based TD methods that are guaranteed to converge even in the off-policy setting, and for which, to the best of our knowledge, no finite-sample analysis has been reported. This analysis is challenging because {\bf 1)} the stochastic approximation methods that have been used to prove the asymptotic convergence of these algorithms do not address convergence rate analysis;  {\bf 2)} as we explain  in detail in Section~\ref{subsec:GTD-algos},  the techniques used for the analysis of the stochastic gradient methods cannot be applied here;  {\bf 3)} finally, the difficulty of finite-sample analysis in the off-policy setting. It should also be noted that there exists very little literature on the finite-sample analysis in the off-policy setting, even for the LSTD-based algorithms that have been extensively studied.


{
The major contributions of this paper include 
\begin{itemize}
\item The first finite-sample analyses of the TD algorithms with linear computational complexity, which is also one of the first few finite-sample {analyses} of off-policy convergent TD algorithms.
\item A novel framework for designing gradient-based TD algorithms with Bellman Error based objective functions, as well as the design and analysis of several improved GTD methods that result from our novel approach of formulating gradient TD methods as true stochastic gradient algorithms w.r.t.~a saddle-point objective function.  
\end{itemize}

We then use the techniques applied in the analysis of the stochastic gradient methods to propose a unified finite-sample analysis for the previously proposed GTD algorithms as well as our novel gradient TD algorithms. Finally, given the results of our analysis, we study the GTD class of algorithms from several different perspectives, including acceleration in convergence and learning with biased importance sampling factors.

}


\section{Preliminaries}
\label{sec:preliminaries}

Reinforcement Learning (RL)~\cite{ndp:book,sutton-barto:book} is a subfield of machine learning, studying a class of problems in which an agent interacts with an unfamiliar, dynamic and stochastic environment, with the goal of optimizing some measure of its long-term performance. This interaction is conventionally modeled as a Markov decision process (MDP). An MDP is defined as the tuple $({\mathcal{S},\mathcal{A},P_{ss'}^{a},R,\gamma})$, where $\mathcal{S}$ and $\mathcal{A}$ are the sets of states and actions, $P_{ss'}^{a}$ is the transition kernel specifying the probability of transition from state $s\in\mathcal{S}$ to state $s'\in\mathcal{S}$ by taking action $a\in\mathcal{A}$, $R(s,a):\mathcal{S}\times\mathcal{A}\to\mathbb{R}$ is the reward function bounded by $R_{\max}$, and $0\leq\gamma<1$ is a discount factor. A stationary policy $\pi:\mathcal{S}\times\mathcal{A}\to\left[{0,1}\right]$ is a probabilistic mapping from states to actions. The main objective of an RL algorithm is to find an optimal policy. In order to achieve this goal, a key step in many algorithms is to calculate the value function of a given policy $\pi$, i.e.,~$V^{\pi}:\mathcal{S}\to\mathbb{R}$, a process known as {\em policy evaluation}. It is known that $V^\pi$ is the unique fixed-point of the {\em Bellman operator} $T^\pi$, i.e.,
\begin{equation}
\label{eq:BellmanEq}
V^\pi = T^\pi V^\pi = R^\pi + \gamma P^\pi V^\pi,
\end{equation}
where $R^\pi$ and $P^\pi$ are the reward function and transition kernel of the Markov chain induced by policy $\pi$. In Eq.~\eqref{eq:BellmanEq}, we may imagine $V^\pi$ as {an} $|\mathcal{S}|$-dimensional vector and write everything in vector/matrix form. In the following, to simplify the notation, we often drop the dependence of $T^\pi$, $V^\pi$, $R^\pi$, and $P^\pi$ {on} $\pi$. 

Off-policy learning refers to learning about one way of behaving, termed as the \textit{target policy}, from data generated by another way of selecting actions, termed as the \textit{behavior policy}. The target policy is often a deterministic policy that approximates the optimal policy. On the other hand, 
the behavior policy is often stochastic, exploring all possible actions in each state in order to find the optimal policy. There are several benefits of learning with behavior policies. First, it allows freeing the behavior policy from the target policy and thus has a greater variety of exploration strategies to be used. Secondly, it enables learning from training data generated by unrelated controllers, including manual human control, and from previously collected data. The third reason for interest in off-policy learning is that it permits learning about multiple target policies (e.g., optimal policies for multiple sub-goals) from a single stream of data generated by a single behavior policy, i.e., parallel learning is allowed for off-policy learning~\cite{maei2011thesis}.

In the paper, we denote by $\pi_b$, the behavior policy that generates the data, and by $\pi$, the target policy that we would like to evaluate. They are the same in the on-policy setting and different in the off-policy {setting}. For each state-action pair $(s_i,a_i)$, such that $\pi_b(a_i|s_i)>0$, we define the importance-weighting factor $\rho_i = \pi(a_i|s_i)/\pi _b(a_i|s_i)$ with $\rho_{\max}\geq 0$ being its maximum value over the state-action pairs.

When $\mathcal{S}$ is large or infinite, we often use a linear approximation architecture for $V^\pi$ with parameters $\theta\in\mathbb{R}^d$ and $L$-bounded basis functions $\{\varphi_i\}_{i=1}^d$, i.e.,~$\varphi_i:\mathcal{S}\rightarrow\mathbb{R}$ and $\max_i||\varphi_i||_\infty\leq L$. We denote by $\phi(\cdot)=\big(\varphi_1(\cdot),\ldots,\varphi_d(\cdot)\big)^\top$ the feature vector and by $\mathcal{F}$ the linear function space spanned by the basis functions $\{\varphi_i\}_{i=1}^d$, i.e.,~$\mathcal{F}=\big\{f_\theta\mid\theta\in\mathbb{R}^d\;\text{and}\;f_\theta(\cdot)=\phi(\cdot)^\top\theta\big\}$. We may write the approximation of $V$ in $\mathcal{F}$ in the vector form as $\hat{v}=\Phi\theta$, where $\Phi$ is the $|\mathcal{S}|\times d$ feature matrix. When only $n$ training samples of the form $\mathcal{D}=\big\{\big(s_i,a_i,r_i=r(s_i,a_i),s'_i\big)\big\}_{i=1}^n,\;s_i\sim\xi,\;a_i\sim\pi_b(\cdot|s_i),\;s'_i\sim P(\cdot|s_i,a_i)$, are available ($\xi$ is a distribution over the state space $\mathcal{S}$), we may write the {\em empirical Bellman operator} $\hat{T}$ for a function in $\mathcal{F}$ as 
\begin{equation}
\label{eq:EmpBellmanEq}
\hat{T}(\hat \Phi \theta ) = \hat R + \gamma\hat\Phi '\theta,
\end{equation}
where $\hat{\Phi}$ (resp.~$\hat{\Phi}'$) is the empirical feature matrix of size $n\times d$, whose $i$-th row is the feature vector $\phi(s_i)^\top$ (resp.~$\phi(s'_i)^\top$), and $\hat{R}\in\mathbb{R}^n$ is the reward vector, whose $i$-th element is $r_i$. $\phi(s_i)$ (resp.~$\phi(s'_i)$) will be denoted as $\phi$ (resp.~$\phi'$) for short. 
We denote by $\delta_i(\theta)=r_i+\gamma\phi_i^{'\top}\theta-\phi_i^\top\theta$, the TD error for the $i$-th sample $(s_i,r_i,s'_i)$ and define $\Delta\phi_i=\phi_i-\gamma\phi'_i$. Finally, we define the matrices $A$ and $C$, and the vector $b$ as
\begin{equation}
A := \mathbb{E}\big[\rho_i\phi_i(\Delta\phi_i)^\top\big],\;\;b := \mathbb{E}\left[\rho_i\phi_ir_i\right],\;\;C := \mathbb{E}[\phi_i\phi_i^\top],
\label{eq:abc}   
\end{equation}
\noindent where the expectations are w.r.t.~$\xi$ and $P^{\pi_b}$. We also denote by $\Xi$, the diagonal matrix whose elements are $\xi(s)$, and ${\xi _{\max }} := {\max _s}\xi (s)$. For each sample $i$ in the training set $\mathcal{D}$, we can calculate an unbiased estimate of $A$, $b$, and $C$ as follows:
\begin{equation}
\hat{A}_i := \rho_i\phi_i\Delta\phi_i^\top, \quad\; \hat{b}_i := \rho_ir_i\phi_i, \quad\; \hat{C}_i := \phi_i\phi_i^\top.
\label{eq:atbtct}
\end{equation}


\subsection{Gradient-based TD Algorithms}
\label{subsec:GTD-algos}

The class of gradient-based TD (GTD) algorithms {was} proposed by~\citeauthor{Sutton:GTD1:2008,tdc:2009}~\citeyear{Sutton:GTD1:2008,tdc:2009}. These algorithms target two objective functions: the {\em norm of the expected TD update} (NEU) and the {\em mean-square projected Bellman error} (MSPBE), defined as (see e.g.,~\citeR{maei2011thesis})\footnote{It is important to note that $T$ in Eq.~\eqref{eq:neu} and Eq.~\eqref{eq:mspbe} is $T^\pi$, the Bellman operator of the target policy $\pi$.}
\begin{align}
{\rm {NEU}}(\theta) &= ||\Phi^\top\Xi(T\hat{v}-\hat{v})||^{2}\;,
\label{eq:neu}\\
{\rm {MSPBE}}(\theta) 
&=  ||\hat{v} - \Pi T\hat{v}||_{\xi}^2 = ||\Phi^\top\Xi(T\hat{v}-\hat{v})||_{C^{-1}}^2\;,
\label{eq:mspbe}
\end{align}
%
\noindent where $C=\mathbb{E}[\phi_i\phi_i^\top]=\Phi^\top\Xi\Phi$ is the covariance matrix defined in Eq.~\eqref{eq:abc} and is assumed to be non-singular, and for $\forall x \in {\mathbb{R}^{d \times 1}}, ||x||_{{C^{ - 1}}}^2\; = {x^ \top }{C^{ - 1}}x$. $\Pi = \Phi(\Phi ^\top\Xi\Phi)^{-1}\Phi^\top\Xi$ is the orthogonal projection operator {onto} the function space $\mathcal{F}$, i.e.,~for any bounded function $g$, $\Pi g = \arg {\min _{f \in {\cal F}}}||g - f|{|_\xi } = \arg {\min _{f \in {\cal F}}}{(g - f)^ \top }{\rm{diag}}(\xi )(g - f)$. From Eqs.~\eqref{eq:neu} and~\eqref{eq:mspbe}, it is clear that NEU and MSPBE are square unweighted and weighted by $C^{-1}$, $\ell_2$-norms of the quantity $\Phi^\top\Xi(T\hat{v}-\hat{v})$, respectively, and thus, the two objective functions can be unified as
\begin{equation}
J(\theta)=||\Phi^\top\Xi(T\hat{v}-\hat{v})||_{M^{-1}}^{2} = ||\mathbb{E}[\rho_i\delta_i(\theta)\phi_i]||_{M^{-1}}^{2},
\label{eq:j}
\end{equation}
with $M$ equal to the identity matrix $I$ for NEU and to the covariance matrix $C$ for MSPBE. The second equality in Eq.~\eqref{eq:j} holds because of the following lemma from Section 4.2 in the work of~\citeauthor{maei2011thesis}~\citeyear{maei2011thesis}.
\begin{lemma}
\label{lem:e}
(Importance-weighting for off-policy TD) 
Let $\mathcal{D}=\big\{\big(s_i,a_i,r_i,s'_i\big)\big\}_{i=1}^n,\;s_i\sim\xi,\;a_i\sim\pi_b(\cdot|s_i),\;s'_i\sim P(\cdot|s_i,a_i)$ be a training set generated by the behavior policy $\pi_b$ and $T$ be the Bellman operator of the target policy $\pi$. Then, we have 
\begin{equation}
\Phi^\top\Xi (T\hat v - \hat v) = \mathbb{E}\big[\rho_i\delta_i(\theta)\phi_i\big] = b-A\theta.
\end{equation}
\end{lemma}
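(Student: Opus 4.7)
The plan is to verify both equalities by direct calculation; the only real content is a change of measure via importance sampling. The second equality $\mathbb{E}[\rho_i\delta_i(\theta)\phi_i]=b-A\theta$ is essentially a book-keeping statement: expand $\delta_i(\theta)=r_i+\gamma\phi_i'^{\top}\theta-\phi_i^{\top}\theta$, distribute the expectation, and recognize
\[
\mathbb{E}[\rho_i\phi_i r_i]=b,\qquad \mathbb{E}[\rho_i\phi_i(\phi_i-\gamma\phi_i')^{\top}]\theta=\mathbb{E}[\rho_i\phi_i\Delta\phi_i^{\top}]\theta=A\theta,
\]
by the definitions in \eqref{eq:abc}. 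So the substantive part is the first equality, $\Phi^{\top}\Xi(T\hat v-\hat v)=\mathbb{E}[\rho_i\delta_i(\theta)\phi_i]$.

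For that, I would first expand the left-hand side using $\hat v=\Phi\theta$ and $T\hat v=R^{\pi}+\gamma P^{\pi}\Phi\theta$, giving
\[
\Phi^{\top}\Xi(T\hat v-\hat v)=\Phi^{\top}\Xi R^{\pi}-\Phi^{\top}\Xi(I-\gamma P^{\pi})\Phi\theta.
\]
Then I would compute the right-hand side by writing the expectation as a sum over $(s_i,a_i,s_i')$ with weights $\xi(s_i)\pi_b(a_i\mid s_i)P(s_i'\mid s_i,a_i)$, and using the identity $\pi_b(a\mid s)\cdot\rho=\pi(a\mid s)$ to convert each behavior-policy expectation into a target-policy expectation. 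Concretely, $\mathbb{E}[\rho_i\phi_i r_i]=\sum_s\xi(s)\phi(s)\sum_a\pi(a\mid s)R(s,a)=\Phi^{\top}\Xi R^{\pi}$; next $\mathbb{E}[\rho_i\phi_i\phi_i^{\top}]=\sum_s\xi(s)\phi(s)\phi(s)^{\top}=\Phi^{\top}\Xi\Phi$, since $\sum_a\pi(a\mid s)=1$ and the $\phi_i$ term does not depend on $a_i$ or $s_i'$; and $\mathbb{E}[\rho_i\phi_i\gamma\phi_i'^{\top}]=\gamma\sum_s\xi(s)\phi(s)\sum_{s'}P^{\pi}(s'\mid s)\phi(s')^{\top}=\gamma\Phi^{\top}\Xi P^{\pi}\Phi$, after collapsing the sum over actions into the target-policy transition kernel $P^{\pi}$.

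Assembling these three identities into $\mathbb{E}[\rho_i\delta_i(\theta)\phi_i]$ yields exactly $\Phi^{\top}\Xi R^{\pi}-\Phi^{\top}\Xi(I-\gamma P^{\pi})\Phi\theta$, matching the expansion of $\Phi^{\top}\Xi(T\hat v-\hat v)$. Combined with the first paragraph's calculation, this gives the chain $\Phi^{\top}\Xi(T\hat v-\hat v)=\mathbb{E}[\rho_i\delta_i(\theta)\phi_i]=b-A\theta$.

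There is no real obstacle here; the only place to be careful is in the change-of-measure step, where one must keep track that $s_i\sim\xi$ and $s_i'\sim P(\cdot\mid s_i,a_i)$ are drawn under the behavior process, and confirm that multiplying by $\rho_i=\pi(a_i\mid s_i)/\pi_b(a_i\mid s_i)$ replaces $\pi_b$ by $\pi$ in the action expectation while leaving the $\xi$-weighting over states intact. This is what produces the target-policy operators $R^{\pi}$ and $P^{\pi}$ appearing in $T=T^{\pi}$.
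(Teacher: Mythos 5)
Your proof is correct and follows essentially the same route as the paper's: both rest on the change of measure $\rho(s,a)\,\pi_b(a\mid s)=\pi(a\mid s)$ to convert the target-policy sum defining $\Phi^\top\Xi(T\hat v-\hat v)$ into an expectation under the behavior policy, and then identify $b$ and $A$ from their definitions in Eq.~\eqref{eq:abc}. The only difference is presentational: the paper keeps $\delta(\theta\mid s,a,s')$ intact and works from the left-hand side, while you expand $\delta_i(\theta)$ termwise into the $R^\pi$, $\gamma P^\pi\Phi\theta$, and $\Phi\theta$ contributions, which simply fills in the details the paper defers to Maei's thesis.
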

{
\begin{proof}
We give a proof sketch here. {Refer to Section 4.2 in the work of~\citeauthor{maei2011thesis}~\citeyear{maei2011thesis} for a detailed proof}.
\begin{align}
\nonumber
&{\Phi ^\top }\Xi (T\hat v - \hat v) \\
\nonumber
&= \sum\limits_{s,a,s'} {\xi (s)\pi (a|s)} P(s'|s,a)\delta (\theta |s,a,s')\phi (s)\\
\nonumber
&= \sum\limits_{s,a,s'} {\xi (s)\frac{{\pi (a|s)}}{{{\pi _b}(a|s)}}{\pi _b}(a|s)} P(s'|s,a)\delta (\theta |s,a,s')\phi (s)\\
 \nonumber
&= \sum\limits_{s,a,s'} {\xi (s)\rho (s,a){\pi _b}(a|s)} P(s'|s,a)\delta (\theta |s,a,s')\phi (s)\\
  \nonumber
&= \mathbb{E}[{\rho _t}{\delta _t}(\theta ){\phi _t}]
 \\
 \nonumber
&= b-A\theta
\end{align}
\end{proof}
}
Motivated by minimizing the NEU and MSPBE objective functions using the stochastic gradient methods, the GTD and GTD2 algorithms were proposed with the following update rules: 
\begin{align}
\label{eq:gtd}
\hspace{-1.75cm}\textbf{GTD:}\quad\quad y_{t + 1} &= y_t + \alpha_t\big(\rho_t\delta_t(\theta_t)\phi_t - y_t\big), \\
\theta_{t + 1} &= \theta_t + \alpha_t\rho_t\Delta\phi_t(y_t^\top\phi_t), \nonumber
\end{align}
\begin{align}
\label{eq:gtd2}
\hspace{-1cm}\textbf{GTD2:}\quad\quad y_{t + 1} &= y_t + \alpha_t\big(\rho_t\delta_t(\theta_t) - \phi_t^\top y_t\big)\phi_t,\\
\theta_{t + 1} &= \theta_t + \alpha_t\rho_t\Delta\phi_t(y_t^\top\phi_t). \nonumber
\end{align}
However, it has been shown that the above update rules do not update the value function parameter $\theta$ in the gradient direction of NEU and MSPBE, and thus, NEU and MSPBE are not the true objective functions of the GTD and GTD2 algorithms~\cite{szepesvari2010algorithms}. Consider the NEU objective function in Eq.~\eqref{eq:neu}. Taking its gradient w.r.t.~$\theta$, we obtain
\begin{eqnarray}
\nonumber
 - \frac{1}{2}\nabla {\rm{NEU}}(\theta ) &=&  - \big(\nabla\mathbb{E}\big[\rho_i\delta_i(\theta)\phi^\top_i\big]\big)\mathbb{E}\big[\rho_i\delta_i(\theta)\phi_i\big] \\
 \nonumber
 &=&  - \big(\mathbb{E}\big[\rho_i\nabla\delta_i(\theta)\phi^\top_i\big]\big)\mathbb{E}\big[\rho_i\delta_i(\theta)\phi_i\big]\\
 &=& \mathbb{E}\big[\rho_i\Delta\phi_i\phi_i^\top\big]\mathbb{E}\big[\rho_i\delta_i(\theta)\phi_i\big].
 \label{eq:neu-grad}
\end{eqnarray}
If the gradient can be written as a single expectation, then it is straightforward to use a stochastic gradient method. However, we have a product of two expectations in Eq.~\eqref{eq:neu-grad}, and unfortunately, due to the correlation between them, the sample product (with a single sample) won't be an unbiased estimate of the gradient. To tackle this, the GTD algorithm uses an auxiliary variable $y_t$ to estimate $\mathbb{E}\big[\rho_i\delta_i(\theta)\phi_i\big]$, and thus, the overall algorithm is no longer a true stochastic gradient method w.r.t.~NEU. It can be easily shown that the same problem exists for GTD2 w.r.t.~the MSPBE objective function. This prevents us from using the standard convergence analysis techniques of stochastic gradient descent methods to obtain a finite-sample performance bound for the GTD and GTD2 algorithms.

It should also be noted that in the original publications of GTD/GTD2 algorithms~\cite{Sutton:GTD1:2008,tdc:2009}, the authors discussed handling the off-policy scenario using both importance and rejection sampling. In rejection sampling, a sample $({s_i},{a_i},{r_i},s'_i)$ is rejected and the parameter $\theta$ {is not updated} if $\pi(a_i|s_i) = 0$. This sampling strategy is not efficient since a lot of samples will be discarded if $\pi_b$ and $\pi$ are very different. 


\subsection{Related Work}

Before we present a finite-sample performance bound for GTD and GTD2, it would be helpful to give a brief overview of the existing literature on the finite-sample analysis of the TD algorithms. The convergence rate of the TD algorithms mainly depends on $(d,n,\nu)$, where $d$ is the size of the approximation space (the dimension of the feature vector), $n$ is the number of samples, and $\nu$ is the smallest eigenvalue of the sample-based covariance matrix $\hat C=\hat\Phi^\top\hat\Phi$, i.e.,~$\nu=\lambda_{\min }(\hat C)$.

\citeauthor{antos08learning}~\citeyear{antos08learning} proved an error bound of $O(\frac{d\log d}{n^{1/4}})$ for LSTD in bounded spaces. \citeauthor{Lazaric_finite-sampleanalysis}~\citeyear{Lazaric_finite-sampleanalysis} proposed an LSTD analysis in linear spaces and obtained a tighter bound of $O(\sqrt{\frac{d\log d}{n\nu })}$ and later used it to derive a bound for the least-squares policy iteration (LSPI) algorithm~\cite{Lazaric12FS}. \citeauthor{bruno:lstdlambda}~\citeyear{bruno:lstdlambda} recently proposed the first convergence analysis for LSTD$(\lambda)$ and derived a bound of $\tilde O(d/\nu\sqrt n )$. The analysis differs slightly from the work of~\citeauthor{Lazaric_finite-sampleanalysis}~\citeyear{Lazaric_finite-sampleanalysis} and the bound is weaker in terms of $d$ and $\nu$. Another recent result is by~\citeauthor{drift:prashanth2014fast}~\citeyear{drift:prashanth2014fast} that {uses} stochastic approximation to solve LSTD$(0)$, where the resulting algorithm is exactly TD$(0)$ with random sampling (samples are drawn i.i.d.~and not from a trajectory), and report a Markov design bound (the bound is computed only at the states used by the algorithm) of $O(\sqrt{\frac{d}{n\nu}})$ for LSTD$(0)$. All these results are for the on-policy setting, except the result by~\citeauthor{antos08learning}~\citeyear{antos08learning} that also holds for the off-policy formulation. Another {result} in the off-policy setting is by~\citeauthor{pires:2012:inverse}~\citeyear{pires:2012:inverse} that uses a bounding trick and improves the result of~\citeauthor{antos08learning}~\citeyear{antos08learning} by a $\log d$ factor. 
Another line of work is by~\citeauthor{yu2012error}~\citeyear{yu2012error}, which provides error bounds of LSTD algorithms for a wide range of problems including the scenario that $||A||_{\xi}$ is unbounded, which is beyond the scope of the aforementioned literature and our paper.

The line of research reported here has much in common with work on proximal reinforcement learning~\cite{proximalrl}, which explores first-order reinforcement learning algorithms using {\em mirror maps}~\cite{bubeck2014optml,juditsky2008solving} to construct primal-dual spaces. This work began originally with a dual space formulation of first-order sparse TD learning~\cite{mahadevan:MID:2012}. The saddle point formulation for off-policy TD learning was initially explored by~\citeauthor{ROTD:NIPS2012}~\citeyear{ROTD:NIPS2012}, where the objective function is the norm of the approximation residual of a linear inverse problem \cite{pires:2012:inverse}. A sparse off-policy GTD2 algorithm with regularized dual averaging is introduced by~\citeauthor{ZHIWEI2014}~\citeyear{ZHIWEI2014}. These studies provide different approaches to formulating the problem {1)} as a variational inequality problem~\cite{juditsky2008solving,proximalrl}, {2)} as a linear inverse problem~\cite{ROTD:NIPS2012}, or {3)} as a quadratic objective function (MSPBE) using two-time-scale solvers~\cite{ZHIWEI2014}. In this paper, we are going to explore the true nature of the GTD algorithms as stochastic gradient {algorithms} w.r.t the convex-concave saddle-point formulations of NEU and MSPBE.


\section{Saddle-Point Formulation of GTD Algorithms}
\label{sec:saddle-point}

In this section, we show how the GTD and GTD2 algorithms can be formulated as true stochastic gradient (SG) algorithms by writing their respective objective functions, NEU and MSPBE, in the form of a convex-concave saddle-point. As discussed earlier, this new formulation of GTD and GTD2 as true SG methods allows us to use the convergence analysis techniques for SGs in order to derive finite-sample performance bounds for these RL algorithms. Moreover, it allows us to use more efficient algorithms that have been recently developed to solve SG problems, such as {\em stochastic Mirror-Prox} (SMP)~\cite{juditsky2008solving}, to derive more efficient versions of GTD and GTD2. 

A particular type of convex-concave saddle-point formulation is formally defined as  
\begin{equation}
\mathop {\min }\limits_\theta  \mathop {\max }\limits_y \big( {L(\theta ,y) = \left\langle {b-A\theta,y} \right\rangle  + F(\theta ) - K(y)} \big),
\label{eq:spgeneral}
\end{equation}
where $F(\theta)$ is a convex function and $K(y)$ is a smooth convex function such that  
\begin{equation}
\label{eq:saddle-gcondition}
K(y) - K(x) - \left\langle {\nabla K(x),y - x} \right\rangle  \le \frac{{{L_K}}}{2}||x - y|{|^2}.
\end{equation}
Next we follow the approach of prior work~\cite{juditsky2008solving,RobustSA:2009,chen2013optimal} and define the following error function for the saddle-point problem~\eqref{eq:spgeneral}.
\begin{definition}
The error function of the saddle-point problem~\eqref{eq:spgeneral} at each point $(\theta',y')$ is defined as 
\begin{equation}
{\rm{Err}}(\theta',y') = \max_y\;L(\theta',y) - \min_\theta\;L(\theta,y').
\label{eq:errdef}
\end{equation}
\end{definition}
In this paper, we consider the saddle-point problem~\eqref{eq:spgeneral} with $F(\theta)=0$ and $K(y)=\frac{1}{2}||y||_M^2$, i.e.,
\begin{equation}
\mathop{\min}\limits_{\theta}\mathop{\max}\limits_{y}\Big({L(\theta,y)=\left\langle {b-A\theta,y}\right\rangle -\frac{1}{2}||y||_{M}^{2}}\Big),
\label{eq:sp}
\end{equation}
where $A$ and $b$ were defined by Eq.~\eqref{eq:abc}, and $M$ is a positive definite matrix. It can be shown that $K(y)=\frac{1}{2}||y||^2_M$ satisfies the condition in Eq.~\eqref{eq:saddle-gcondition} by using the Taylor expansion of $y$ about $x$, i.e.,
\begin{equation}
\frac{1}{2}||y||_M^2 \ge \frac{1}{2}||x||_M^2 + M(y - x) + \frac{{{L_K}}}{2}||y - x||_M^2
\end{equation} 
We first show in Proposition~\ref{pro:1} that if $(\theta^*,y^*)$ is the saddle-point of problem~\eqref{eq:sp}, then $\theta^*$ will be the optimum of NEU and MSPBE defined in Eq.~\eqref{eq:j}. We then prove in Proposition~\ref{pro:2} that GTD and GTD2 in fact find this saddle-point.
%
\begin{proposition}
For any fixed $\theta$, we have $\frac{1}{2}J(\theta)=\mathop{\max}_yL(\theta,y)$, where $J(\theta)$ is defined by Eq.~\eqref{eq:j}.
\label{pro:1}
\end{proposition}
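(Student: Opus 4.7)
The plan is to compute the inner maximization in closed form and recognize the result as $\tfrac{1}{2}J(\theta)$ via Lemma~\ref{lem:e}. Fix $\theta$ and regard $L(\theta,\cdot)$ as a function of $y$ alone. Since $M$ is positive definite, the quadratic term $-\tfrac{1}{2}\|y\|_M^2 = -\tfrac{1}{2}y^\top M y$ is strictly concave in $y$, and the linear term $\langle b-A\theta, y\rangle$ is affine; hence $L(\theta,\cdot)$ is strictly concave in $y$ and admits a unique maximizer characterized by the first-order condition.

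First I would differentiate with respect to $y$ and set the gradient to zero. This yields $b - A\theta - My = 0$, so the maximizer is $y^\star(\theta) = M^{-1}(b - A\theta)$, which is well-defined because $M$ is invertible. Substituting $y^\star(\theta)$ back into $L(\theta,y)$ gives
\begin{equation}
L(\theta, y^\star(\theta)) = (b-A\theta)^\top M^{-1}(b-A\theta) - \tfrac{1}{2}(b-A\theta)^\top M^{-1} M M^{-1}(b-A\theta) = \tfrac{1}{2}\|b-A\theta\|_{M^{-1}}^2.
\end{equation}

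Next I would invoke Lemma~\ref{lem:e}, which identifies $b - A\theta = \mathbb{E}[\rho_i \delta_i(\theta)\phi_i] = \Phi^\top \Xi (T\hat v - \hat v)$. Combining this with the definition of $J(\theta)$ in Eq.~\eqref{eq:j} gives $\|b-A\theta\|_{M^{-1}}^2 = J(\theta)$, so $\max_y L(\theta, y) = \tfrac{1}{2}J(\theta)$, as claimed.

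I do not anticipate any real obstacle here; the proof is essentially the standard Fenchel/Legendre duality between a quadratic form $\tfrac{1}{2}\|y\|_M^2$ and its conjugate $\tfrac{1}{2}\|\cdot\|_{M^{-1}}^2$. The only point worth care is to confirm that the first-order condition indeed gives a global maximum (which follows from strict concavity in $y$ whenever $M \succ 0$) and that Lemma~\ref{lem:e} applies uniformly for both choices $M = I$ (NEU) and $M = C$ (MSPBE), so that a single argument covers both objectives simultaneously.
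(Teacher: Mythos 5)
Your proof is correct and follows essentially the same route as the paper's own argument: solve the unconstrained concave quadratic in $y$ to get $y^*(\theta)=M^{-1}(b-A\theta)$, substitute back to obtain $\tfrac{1}{2}\|b-A\theta\|_{M^{-1}}^2$, and identify this with $\tfrac{1}{2}J(\theta)$ via Lemma~\ref{lem:e}. The Fenchel-conjugate remark you add is also the alternative justification the paper itself sketches before its proof, so nothing further is needed.
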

Readers familiar with Legendre-Fenchel duality or Legendre transform can easily prove this by using the fact that the Legendre-Fenchel convex conjugate function~\cite{boyd} of $f=\frac{1}{2}||Ax-b||{_{{M^{-1}}}}$ is ${f^{*}}=\frac{1}{2}||Ax-b|{|_{M}}$, and 
\begin{align}
f(x) = \frac{1}{2}||Ax - b|{|^2_{M^{ - 1}}} = {f^{**}}(x) = {\max _y}\big({y^\top}(Ax - b) - \frac{1}{2}||y||^2_M\big)
\end{align}
The second equality holds because $f(x)$ is convex.
We can also prove this another way as follows.
\begin{proof}
$L(\theta,y)$ is an unconstrained quadratic program w.r.t.~$y$, therefore, the optimal $y^*(\theta)=\arg\max_y L(\theta,y)$ can be analytically computed as
\begin{equation}
y^{*}(\theta)=M^{-1}(b-A\theta).
\label{eq:ystar}
\end{equation}
The result follows by plugging $y^*$ into Eq.~\eqref{eq:sp} and using the definition of $J(\theta)$ in Eq.~\eqref{eq:j} and Lemma~\ref{lem:e}.
\end{proof}
%
\begin{proposition}
\label{pro:2}
GTD and GTD2 are true stochastic gradient algorithms w.r.t.~the objective function $L(\theta,y)$ of the saddle-point problem~\eqref{eq:sp} with $M=I$ and $M=C={\Phi^{\top}}\Xi\Phi$ (the covariance matrix), respectively.
\end{proposition}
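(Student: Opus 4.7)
The plan is to directly compute the (sub)gradients of the saddle-point objective $L(\theta,y)$ with respect to $\theta$ and $y$, apply the standard stochastic primal-dual update (descent in $\theta$, ascent in $y$), substitute the unbiased sample estimators $\hat A_i,\hat b_i,\hat C_i$ from Eq.~\eqref{eq:atbtct}, and verify that the resulting one-sample updates coincide exactly with Eqs.~\eqref{eq:gtd} and~\eqref{eq:gtd2}. This is essentially a pattern-matching calculation once the gradients are written down.

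First I would compute $\nabla_\theta L(\theta,y) = -A^\top y$ and $\nabla_y L(\theta,y) = b - A\theta - My$. The stochastic primal-dual update with step size $\alpha_t$ is then
\begin{align}
\theta_{t+1} &= \theta_t - \alpha_t\bigl(-\hat A_t^\top y_t\bigr) = \theta_t + \alpha_t\,\hat A_t^\top y_t,\\
y_{t+1} &= y_t + \alpha_t\bigl(\hat b_t - \hat A_t\theta_t - \hat M_t y_t\bigr),
\end{align}
where $\hat M_t$ is either $I$ or $\hat C_t = \phi_t\phi_t^\top$. Using $\hat A_t^\top = \rho_t\Delta\phi_t\phi_t^\top$, the $\theta$-update immediately becomes $\theta_{t+1} = \theta_t + \alpha_t\rho_t\Delta\phi_t(y_t^\top\phi_t)$, which is common to both GTD and GTD2.

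For the $y$-update with $M=I$, I would expand $\hat b_t - \hat A_t\theta_t = \rho_t r_t\phi_t - \rho_t\phi_t\Delta\phi_t^\top\theta_t = \rho_t\phi_t\bigl(r_t-\Delta\phi_t^\top\theta_t\bigr) = \rho_t\delta_t(\theta_t)\phi_t$, using the identity $\delta_t(\theta)=r_t-\Delta\phi_t^\top\theta$ that follows from $\Delta\phi_t=\phi_t-\gamma\phi_t'$. This recovers the GTD rule~\eqref{eq:gtd}. For $M=C$, I would instead substitute $\hat C_t y_t = \phi_t\phi_t^\top y_t = (\phi_t^\top y_t)\phi_t$, so that the $y$-update becomes $y_{t+1} = y_t + \alpha_t\bigl(\rho_t\delta_t(\theta_t)-\phi_t^\top y_t\bigr)\phi_t$, which is exactly GTD2~\eqref{eq:gtd2}.

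To close the argument I would verify that $\hat A_t,\hat b_t,\hat C_t$ are unbiased for $A,b,C$ under the sampling distribution (this is already given by Eq.~\eqref{eq:atbtct} together with Lemma~\ref{lem:e}), so each single-sample update is an unbiased stochastic (sub)gradient of $L(\theta,y)$, not merely a stochastic approximation of an ad-hoc fixed-point iteration. There is no real obstacle here beyond bookkeeping; the only mildly subtle point is the sign/transpose convention in the $\theta$-update (because the saddle-point is \emph{min} in $\theta$ and \emph{max} in $y$, one must flip the sign of $\nabla_\theta L$), and the algebraic identity $r_t-\Delta\phi_t^\top\theta = \delta_t(\theta)$ which is what makes the $M=I$ case collapse to the familiar TD-error-times-feature form.
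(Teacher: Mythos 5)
Your proposal is correct and follows essentially the same route as the paper's proof: write the primal-dual gradient updates of $L(\theta,y)$ (descent in $\theta$, ascent in $y$) and replace $A$, $b$, $M$ with the unbiased single-sample estimates $\hat A_t$, $\hat b_t$, $\hat M_t$ to recover Eqs.~\eqref{eq:gtd} and~\eqref{eq:gtd2}. The only difference is that you carry out the substitution algebra (e.g., $\hat b_t - \hat A_t\theta_t = \rho_t\delta_t(\theta_t)\phi_t$ and $\hat C_t y_t = (\phi_t^\top y_t)\phi_t$) explicitly, which the paper leaves implicit.
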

\begin{proof} 
It is easy to see that the gradient updates of the saddle-point problem~\eqref{eq:sp} (ascending in $y$ and descending in $\theta$) may be written as 
\begin{eqnarray}
\label{eq:sg}
{y_{t + 1}} &=& {y_t} + {\alpha _t}\left( b-{{A}{\theta _t}  - {M}{y_t}} \right),\,\\
\nonumber
{\theta _{t + 1}} &=& {\theta _t} + {\alpha _t}A^\top {y_t}.
\end{eqnarray}
We denote ${\hat M} := I$ (resp. ${\hat M} := {\hat C}$) for GTD (resp. GTD2). 
We may obtain the update rules of GTD and GTD2 by replacing $A$, $b$, and $C$ in Eq.~\eqref{eq:sg} with their unbiased estimates $\hat A$, $\hat b$, and $\hat C$ from Eq.~\eqref{eq:atbtct}, which completes the proof. 
\end{proof}


\section{Finite-Sample Analysis}
\label{sec:analysis}

In this section, we provide a finite-sample analysis for a revised version of the GTD/GTD2 algorithms. We first describe the revised GTD algorithms in Section~\ref{subsec:revised} and then dedicate the rest of Section~\ref{sec:analysis} to their sample analysis. Note that from now on we use the $M$ matrix (and its unbiased estimate $\hat M_t$) to have a unified analysis of GTD and GTD2 algorithms. As described earlier, $M$ is replaced by the identity matrix $I$ in GTD and by the covariance matrix $C$ (and its unbiased estimate $\hat C_t$) in GTD2.


\subsection{The Revised GTD Algorithms}
\label{subsec:revised}

The revised GTD algorithms that we analyze in this paper (see Algorithm~\ref{alg:pgtd2}) have three differences with the standard GTD algorithms of Eqs.~\eqref{eq:gtd} and~\eqref{eq:gtd2} (and Eq.~\eqref{eq:sg}). 
\begin{itemize}
\item We guarantee that the parameters $\theta$ and $y$ remain bounded by projecting them onto bounded convex feasible sets $\Theta$ and $Y$ defined in Assumption~\ref{ass:xyfeasible}. In Algorithm~\ref{alg:pgtd2}, we denote by $\Pi_\Theta$ and $\Pi_Y$, the projection {onto} sets $\Theta$ and $Y$, respectively. This is standard in stochastic approximation algorithms and has been used in off-policy TD($\lambda$)~\cite{yu2012error} and actor-critic algorithms (e.g.,~\citeR{Bhatnagar09NA}).
\item After $n$ iterations ($n$ is the number of training samples in $\mathcal{D}$), the algorithms return the weighted (by the step size) average of the parameters at all the $n$ iterations (see Eq.~\eqref{eq:bartheta}).
\item The step-size $\alpha_t$ is selected as described in the proof of Proposition~\ref{pro:hp} in the Appendix. Note that this fixed step size of $O(1/\sqrt{n})$ is required for the high-probability bound in Proposition~\ref{pro:hp} (see~\citeR{RobustSA:2009} for more details).
\end{itemize}

\begin{algorithm}
\caption{Revised GTD Algorithms}
\label{alg:pgtd2} 
\begin{algorithmic}[1]
\FOR {$t=1,\ldots,n$}
\STATE Update parameters

\begin{align}
\label{eq:pgtd2}
y_{t+1} &= \Pi_Y \Big(y_t + \alpha_t(\hat{b_t} - \hat{A}_t\theta_t - \hat{M}_ty_t)\Big) \nonumber \\
\theta_{t+1} &= \Pi_\Theta\Big(\theta_t + \alpha_t \hat{A}_t^\top y_t\Big)
\end{align}

\ENDFOR
\STATE OUTPUT

\begin{equation}
{\bar \theta _n} := \frac{{\sum\nolimits_{t = 1}^n {{\alpha _t}{\theta _t}} }}{{\sum\nolimits_{t = 1}^n {{\alpha _t}} }}
\quad , \quad
{\bar y _n} := \frac{{\sum\nolimits_{t = 1}^n {{\alpha _t}{y _t}} }}{{\sum\nolimits_{t = 1}^n {{\alpha _t}} }}
\label{eq:bartheta}
\end{equation}

\end{algorithmic}
\end{algorithm}
%


\subsection{Assumptions}

In this section, we make several assumptions on the MDP and basis functions that are used in our finite-sample analysis of the revised GTD algorithms. These assumptions are quite standard and are similar to those made in the prior work on  GTD algorithms~\cite{Sutton:GTD1:2008,tdc:2009,maei2011thesis} and those made in the analysis of SG algorithms~\cite{RobustSA:2009}.
\begin{assumption}
(\textbf{Feasibility Sets})
We define the bounded closed convex sets $\Theta\subset\mathbb{R}^d$ and $Y\subset\mathbb{R}^d$ as the feasible sets in Algorithm~\ref{alg:pgtd2}. We further assume that the saddle-point $(\theta^*,y^*)$ of the optimization problem~\eqref{eq:sp} belongs to $\Theta\times Y$. We also define $D_\theta:=\big[\max_{\theta\in\Theta}||\theta||_2^2-\min_{\theta\in\Theta}||\theta||_2^2\big]^{1/2}$, $D_y:=\big[\max_{y\in Y}||y||_2^2-\min_{y\in Y}||y||_2^2\big]^{1/2}$, and $R=\max\big\{\max_{\theta\in\Theta}||\theta||_2,\max_{y\in Y}||y||_2\big\}$. 
\label{ass:xyfeasible}
\end{assumption}
\begin{assumption}
(\textbf{Non-singularity})
We assume that the covariance matrix $C  = \mathbb{E}[{\phi_i}\phi_i^\top]$ and matrix $A=\mathbb{E}\big[\rho_i\phi_i(\Delta\phi_i)^\top\big]$ are non-singular.
\label{ass:C}
\end{assumption}
\begin{assumption} 
(\textbf{Boundedness})
\label{ass:bound}
{We assume} the features $(\phi_i,\phi^{'}_i)$ have uniformly bounded second moments. This together with the boundedness of features (by $L$) and importance weights (by $\rho_{\max}$) guarantees that the matrices $A$ and $C$, and vector $b$ are uniformly bounded. 
\end{assumption}
This assumption guarantees that for any $(\theta,y)\in\Theta\times Y$, the unbiased estimators of $b-A\theta-My$ and $A^\top y$, i.e.,
\begin{align}
\mathbb{E}[\hat{b}_t - \hat{A}_t\theta - \hat{M}_t y] &= b - A\theta  - My, \nonumber \\
\mathbb{E}[\hat{A}_t^\top y] &= {A^\top}y,
\end{align}
all have bounded variance, i.e.,
\begin{align}
\nonumber
\mathbb{E}\big[||\hat{b}_t - \hat{A}_t\theta - \hat{M}_t y - (b - A\theta  - My)|{|^2}\big] &\le \sigma _1^2, \nonumber \\
\mathbb{E}\big[||\hat{A}_t^\top y - {A^\top }y|{|^2}\big] &\le \sigma _2^2,
\label{eq:sigma123}
\end{align}
where $\sigma_1$ and $\sigma_2$ are non-negative constants. We further define 
\begin{equation}
\label{eq:sigma}
\sigma^2  = \sigma _1^2 + \sigma _2^2.
\end{equation}
Assumption~\ref{ass:bound} also gives us the following ``light-tail'' assumption. 
There exist constants ${M_{*,\theta }}$ and ${M_{*,y}}$ such that
\begin{align}
\label{eq:lt}
\nonumber
&\mathbb{E}[\exp \{ \frac{{||{{\hat b}_t} - {{\hat A}_t}\theta  - {{\hat M}_t}y|{|^2}}}{{M_{_{*,\theta }}^2}}\} ]  \le \exp \{ 1\}, \\
&\mathbb{E}[\exp \{ \frac{{||\hat A_t^ \top y|{|^2}}}{{M_{_{*,y}}^2}}\} ] \le \exp \{ 1\}. 
\end{align}
This ``light-tail'' assumption is equivalent to the assumption in Eq.~(3.16) in the work by~\citeauthor{RobustSA:2009}~\citeyear{RobustSA:2009} and is necessary for the high-probability bound of Proposition~\ref{pro:hp}. We will show how to compute ${M_{*,\theta }},{M_{*,y}}$ in the Appendix.


\subsection{Finite-Sample Performance Bounds}
\label{subsec:FSB}

The finite-sample performance bounds that we derive for the GTD algorithms in this section are for the case that the training set $\mathcal{D}$ has been generated as discussed in Section~\ref{sec:preliminaries}. We further discriminate between the on-policy ($\pi=\pi_b$) and off-policy ($\pi\neq\pi_b$) scenarios. The sampling scheme used to generate $\mathcal{D}$, in which the first state of each tuple, $s_i$, is an i.i.d.~sample from a distribution $\xi$, also considered in the original GTD and GTD2 papers {is} for the analysis of these algorithms, and not {used} in the experiments~\cite{Sutton:GTD1:2008,tdc:2009}. Another scenario that can motivate this sampling scheme is when we are given a set of high-dimensional data  generated either in an on-policy or off-policy manner, and $d$ is so large that the value function of the target policy cannot be computed using a least-squares method (that involves matrix inversion), and iterative techniques similar to GTD/GTD2 are required. 

We first derive a high-probability bound on the error function of the saddle-point problem~\eqref{eq:sp} at the GTD solution $(\bar{\theta}_n,\bar{y}_n)$. Before stating this result in Proposition~\ref{pro:hp}, we report the following lemma that is used in its proof.

\begin{lemma}
\label{lem:abbound}
The induced $\ell_2$-norm of matrix $A$ and the $\ell_2$-norm of vector $b$ are bounded by 
\begin{align}
||A||_2
 \le  (1 + \gamma ){\rho _{\max }}L^2 d,\quad\;
||b||_2
 \le  {\rho _{\max }}L{{\rm{R}}_{\max }}. 
 \label{eq:abbound}
\end{align}
\end{lemma}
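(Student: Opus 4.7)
The plan is to bound each quantity by pulling the $\ell_2$ operator norm inside the expectation and then applying a rank-one / submultiplicativity argument together with the hypotheses on feature boundedness, importance-weights, and rewards.

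First I would invoke convexity of norms (Jensen's inequality for matrix/vector valued random variables): $\|A\|_2 \le \mathbb{E}\bigl[\,\|\rho_i\phi_i(\Delta\phi_i)^\top\|_2\,\bigr]$ and $\|b\|_2 \le \mathbb{E}\bigl[\,\|\rho_i r_i \phi_i\|_2\,\bigr]$. Since $\rho_i \ge 0$ and $\rho_i \le \rho_{\max}$ almost surely, both factors pull out of the norm and can be replaced by $\rho_{\max}$. It remains to bound the residual norms $\|\phi_i (\Delta\phi_i)^\top\|_2$ and $|r_i|\,\|\phi_i\|_2$.

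Next I would use the two elementary facts: (i) for any vectors $u,v$, the rank-one matrix satisfies $\|uv^\top\|_2 = \|u\|_2\|v\|_2$; and (ii) by the assumption $\max_i\|\varphi_i\|_\infty \le L$, one has $\|\phi(s)\|_2 = \bigl(\sum_{j=1}^d \varphi_j(s)^2\bigr)^{1/2} \le L\sqrt{d}$ for every state $s$. Combining these with the triangle inequality applied to $\Delta\phi_i = \phi_i - \gamma\phi'_i$ gives $\|\Delta\phi_i\|_2 \le (1+\gamma)L\sqrt{d}$, hence $\|\phi_i(\Delta\phi_i)^\top\|_2 \le (1+\gamma)L^2 d$. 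Substituting and taking expectation yields the claimed bound $\|A\|_2 \le (1+\gamma)\rho_{\max} L^2 d$. For $b$, the reward boundedness $|r_i|\le R_{\max}$ plus the same bound on $\|\phi_i\|_2$ produces $\|b\|_2 \le \rho_{\max} L R_{\max}$ (up to the usual feature-norm convention).

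There is no real obstacle here: the argument is entirely a chain of standard norm inequalities (Jensen/triangle, submultiplicativity, and the feature bound). The only minor point to be careful about is being consistent about whether one uses componentwise ($\ell_\infty$) or Euclidean ($\ell_2$) boundedness of the feature vector when invoking $L$, since this controls whether a factor of $\sqrt{d}$ appears; matching the statement of the lemma fixes the bookkeeping.
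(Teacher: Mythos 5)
Your proof is correct and follows essentially the same route as the paper's: pull the norm inside the expectation (the paper uses a maximum over states rather than Jensen, which is equivalent for this purpose), factor out $\rho_{\max}$, bound the rank-one term by the product of vector norms, and finish with the triangle inequality and $\|\phi(s)\|_2 \le \|\phi(s)\|_\infty\sqrt{d} \le L\sqrt{d}$. The $\sqrt{d}$ bookkeeping you flag for $b$ is a genuine subtlety: carried through consistently your argument yields $\rho_{\max}LR_{\max}\sqrt{d}$, and in fact the paper's own proof of the $b$ bound silently bounds $\max_s\|\phi(s)\|_2$ by $L$ (without the $\sqrt{d}$) at that step, so the stated bound reflects that convention rather than a different argument.
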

\begin{proof}
See the Appendix. 
\end{proof}

\begin{proposition}
\label{pro:hp}
Let $(\bar{\theta}_n,\bar{y}_n)$ be the output of the GTD algorithm after $n$ iterations (see Eq.~\eqref{eq:bartheta}). Then, with probability at least $1-\delta$, we have 
\begin{align}
\label{eq:hp}
{\rm{Err}}({\bar \theta _n}&,{\bar y_n}) \le \sqrt {\frac{5}{n}} (8 + 2\log \frac{2}{\delta }){R^2} \\
&\times \left(\rho_{\max}L\Big(2(1 + \gamma )Ld + \frac{R_{\max}}{R}\Big) + \tau + \frac{\sigma }{R}\right), \nonumber
\end{align}
where ${\rm{Err}}({\bar \theta _n},{\bar y_n})$ is the error function of the saddle-point problem~\eqref{eq:sp} defined by Eq.~\eqref{eq:errdef}, $R$ {is} defined in Assumption~\ref{ass:xyfeasible}, $\sigma$ is from Eq.~\eqref{eq:sigma}, and $\tau=\sigma_{\max}(M)$ is the largest singular value of $M$, which means $\tau= 1$ for GTD and $\tau=\sigma_{\max}(C)$ for GTD2.
\end{proposition}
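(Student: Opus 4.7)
The plan is to recognize that Algorithm~\ref{alg:pgtd2}, by Proposition~\ref{pro:2}, is precisely a projected stochastic mirror-descent / ``Robust SA'' iteration applied to the convex--concave saddle-point problem~\eqref{eq:sp}, with unbiased stochastic oracles $\hat b_t-\hat A_t\theta-\hat M_t y$ and $\hat A_t^\top y$ for the partial gradients $\nabla_y L$ and $-\nabla_\theta L$. Once framed this way, the central technical hammer is the high-probability bound of \citeauthor{RobustSA:2009}~\citeyear{RobustSA:2009} (Theorem~3.2/Prop.~3.2) for the averaged iterates of the Robust SA scheme on monotone saddle-point problems: with step size $\alpha_t = c/\sqrt n$, the duality gap Err at the weighted average obeys
\begin{equation}
\mathrm{Err}(\bar\theta_n,\bar y_n)\;\le\;\frac{C\,R^2}{\sqrt n}\Big(\mathcal{L} + \sigma\Big)\big(1+\log(1/\delta)\big)
\end{equation}
with probability at least $1-\delta$, where $\mathcal{L}$ is the Lipschitz constant of the deterministic monotone operator and $\sigma$ the oracle noise level (made precise by the light-tail condition~\eqref{eq:lt}).

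First I would check the two hypotheses of the Robust SA theorem. Monotonicity of the operator $\Psi(\theta,y)=(-A^\top y,\,A\theta-b+My)$ follows because $L$ is linear in $\theta$ and strongly concave in $y$ (using that $M\succeq 0$, and $M=I$ for GTD while $M=C\succ 0$ for GTD2 by Assumption~\ref{ass:C}); the Lipschitz condition on $K$ in Eq.~\eqref{eq:saddle-gcondition} is already asserted for $K(y)=\tfrac12\|y\|_M^2$ with $L_K=\sigma_{\max}(M)=\tau$. The light-tail conditions are exactly Eq.~\eqref{eq:lt} with explicit constants $M_{*,\theta},M_{*,y}$ to be exhibited in the appendix. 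Unbiasedness of the oracles is built into~\eqref{eq:atbtct}, and the feasible sets $\Theta,Y$ are bounded with diameter at most $2R$ by Assumption~\ref{ass:xyfeasible}.

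Next I would compute the Lipschitz constant of $\Psi$. Since $\Psi$ is affine, its Lipschitz constant is controlled by $\|A\|_2$ and $\|M\|_2=\tau$. By Lemma~\ref{lem:abbound}, $\|A\|_2\le (1+\gamma)\rho_{\max}L^2 d$, so for $(\theta,y)\in\Theta\times Y$ the deterministic part of the update is bounded in norm by
\begin{equation}
\|b-A\theta\|_2 + \|My\|_2 \;\le\; \rho_{\max}L R_{\max} + (1+\gamma)\rho_{\max}L^2 d\,R + \tau R,
\end{equation}
and similarly $\|A^\top y\|_2\le (1+\gamma)\rho_{\max}L^2 d\,R$. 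Collecting these and dividing by $R$ produces the factor $\rho_{\max}L\bigl(2(1+\gamma)Ld + R_{\max}/R\bigr)+\tau$ that appears inside the parentheses of~\eqref{eq:hp}; the stochastic contribution adds a term $\sigma/R$ from the noise level of Eq.~\eqref{eq:sigma}.

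Finally I would choose the constant step size $\alpha_t$ to balance the ``diameter$^2$/($n\alpha$)'' term against the ``$\alpha\cdot(\mathcal{L}+\sigma)^2$'' term in Nemirovski's decomposition, which yields the $1/\sqrt n$ rate; the logarithmic $\log(2/\delta)$ factor comes from a standard Azuma--Bernstein argument on the martingale-difference sequence $\Psi(\theta_t,y_t)-[\hat\Psi_t(\theta_t,y_t)]$, whose sub-Gaussian control is precisely guaranteed by~\eqref{eq:lt}. Plugging in and absorbing numerical constants into the prefactor $\sqrt{5}(8+2\log(2/\delta))$ gives~\eqref{eq:hp}. I expect the main obstacle to be the bookkeeping: keeping the constants $R,\tau,\sigma,\rho_{\max},L,d,\gamma,R_{\max}$ disentangled while invoking Nemirovski's generic bound, and verifying the light-tail constants $M_{*,\theta},M_{*,y}$ can be derived from Assumption~\ref{ass:bound} together with the projections onto $\Theta,Y$ (this verification is deferred to the appendix as the statement itself already indicates).
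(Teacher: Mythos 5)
Your proposal is correct and follows essentially the same route as the paper: reduce Algorithm~\ref{alg:pgtd2} to the Euclidean robust--SA setting of \citeauthor{RobustSA:2009}~\citeyear{RobustSA:2009}, verify convex--concavity, boundedness of $\Theta\times Y$, unbiasedness, and the light-tail condition~\eqref{eq:lt}, then invoke their Proposition~3.2 with a constant $O(1/\sqrt n)$ step size and substitute the bounds on $\|A\|_2$ and $\|b\|_2$ from Lemma~\ref{lem:abbound}. The only cosmetic difference is that the paper assembles the constant through the second-moment bounds $M_{*,\theta},M_{*,y}$ of the stochastic subgradients (via $\mathbb{E}\|x\|^2=\mathbb{E}\|x-\mu_x\|^2+\|\mu_x\|^2$) rather than your ``deterministic Lipschitz constant plus noise'' split, which yields the same factor $\rho_{\max}L\bigl(2(1+\gamma)Ld+R_{\max}/R\bigr)+\tau+\sigma/R$.
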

\begin{proof}
We give a proof sketch here.
The proof of Proposition~\ref{pro:hp} heavily relies on Proposition~3.2 in the work by~\citeauthor{RobustSA:2009}~\citeyear{RobustSA:2009}. We just need to map our convex-concave {\em stochastic} saddle-point problem~\eqref{eq:sp}, i.e., 
\begin{equation}
\mathop{\min}\limits_{\theta\in\Theta}\mathop{\max}\limits_{y\in Y}\left({L(\theta,y)=\left\langle {b-A\theta,y}\right\rangle -\frac{1}{2}||y||_{M}^{2}}\right)
\end{equation}
to the form in Proposition~3.2. The details of verifying the conditions are in the Appendix. Note that in the work of~\citeauthor{RobustSA:2009}~\citeyear{RobustSA:2009}, the robust stochastic approximation technique is used, mainly by combining aggressive step-sizes (i.e., large constant step-sizes) and iterative averaging~\cite{polyak1992acceleration} (also termed as \textit{Polyak's averaging}). The choice of the constant step-size is described in the Appendix and the iterative averaging is shown in Eq.~\eqref{eq:bartheta}.
\end{proof}

\begin{theorem}
\label{thm:1}
Let $\bar{\theta}_n$ be the output of the GTD algorithm after $n$ iterations (see Eq.~\eqref{eq:bartheta}). Then, with probability at least $1-\delta$, we have 
\begin{equation}
\frac{1}{2}||A{\bar \theta _n} - b||_{\xi} ^2 \le \tau {\xi _{\max }}\;{\rm{Err}}({\bar \theta _n},{\bar y_n}).
\label{eq:thm1}
\end{equation}
\end{theorem}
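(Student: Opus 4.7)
The plan is to unpack the definition of the error function at the output $(\bar\theta_n,\bar y_n)$ and lower-bound it by a quantity that matches the residual $b - A\bar\theta_n$ up to norm-conversion constants. By the saddle-point inequality for $(\theta^*,y^*)$ we have $\min_\theta L(\theta,\bar y_n) \le L(\theta^*,\bar y_n) \le \max_y L(\theta^*,y) = L(\theta^*,y^*)$, so the definition \eqref{eq:errdef} yields
\begin{equation}
\mathrm{Err}(\bar\theta_n,\bar y_n) \;\ge\; \max_y L(\bar\theta_n,y) \;-\; L(\theta^*,y^*). \nonumber
\end{equation}
This reduces the problem to (i) identifying the two terms on the right and (ii) converting the resulting weighted norm to the $\xi$-weighted form required by \eqref{eq:thm1}.

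For the first term, Proposition~\ref{pro:1} gives $\max_y L(\bar\theta_n,y) = \tfrac12 \|b - A\bar\theta_n\|_{M^{-1}}^2$. For the second, I would invoke Assumption~\ref{ass:C} (non-singularity of $A$), so that the saddle-point equations $\nabla_\theta L = -A^\top y^* = 0$ and $\nabla_y L = b - A\theta^* - M y^* = 0$ force $y^* = 0$ and $A\theta^* = b$; consequently $L(\theta^*,y^*) = 0$. Combining these two observations gives the key intermediate bound
\begin{equation}
\tfrac12\,\|b - A\bar\theta_n\|_{M^{-1}}^2 \;\le\; \mathrm{Err}(\bar\theta_n,\bar y_n). \nonumber
\end{equation}

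The last step is the norm conversion. Since $M$ is symmetric positive definite with largest singular value $\tau = \sigma_{\max}(M)$, one has $\|x\|_{M^{-1}}^2 \ge \tau^{-1}\|x\|_2^2$. Coupling this with the elementary estimate $\|x\|_\xi^2 \le \xi_{\max}\,\|x\|_2^2$ (bounding each diagonal entry of $\Xi$ by $\xi_{\max}$) applied to $x = A\bar\theta_n - b$ produces
\begin{equation}
\tfrac{1}{\tau\,\xi_{\max}}\,\|A\bar\theta_n - b\|_\xi^2 \;\le\; \|b - A\bar\theta_n\|_{M^{-1}}^2, \nonumber
\end{equation}
which chained with the previous display delivers \eqref{eq:thm1}.

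The main obstacle I expect is the second step: one must verify that the saddle point actually satisfies $A\theta^* = b$ and lies in the feasible set $\Theta\times Y$, so that the saddle value really vanishes and the lower bound $\mathrm{Err}(\bar\theta_n,\bar y_n) \ge \max_y L(\bar\theta_n,y) - L(\theta^*,y^*)$ collapses to a one-sided estimate on $\tfrac12\|b-A\bar\theta_n\|_{M^{-1}}^2$. Assumptions~\ref{ass:xyfeasible} and~\ref{ass:C} are precisely what make this legitimate, so the rest is routine once the saddle value is pinned down.
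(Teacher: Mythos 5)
Your proposal is correct and follows essentially the same route as the paper: both arguments drop the $-\min_\theta L(\theta,\bar y_n)$ term by showing the saddle value is zero (using Assumption~\ref{ass:C} so that $A\theta^*=b$, hence $\min_\theta\max_y L=0$), identify $\max_y L(\bar\theta_n,y)=\tfrac12\|A\bar\theta_n-b\|_{M^{-1}}^2$ via Proposition~\ref{pro:1}, and finish with the same $\tau\,\xi_{\max}$ norm-conversion. Your explicit verification of the first-order conditions ($y^*=0$, $A\theta^*=b$) is a slightly more detailed justification of the step the paper states directly, but it is the same proof.
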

\begin{proof}
From Proposition~\ref{pro:1}, for any $\theta$, we have 
\begin{equation}
\max_{y}\;L(\theta,y) = \frac{1}{2}||A\theta  - b||_{M^{-1}}^2.
\end{equation}
Given Assumption~\ref{ass:C}, the system of linear equations $A\theta=b$ has a solution $\theta^*$, i.e., the (off-policy) fixed-point $\theta^*$ exists, and thus, we may write 
\begin{align}
\min_\theta\;\max_y\;L(\theta,y) &= \min_\theta\;\frac{1}{2}||A\theta - b||_{M^{-1}}^2 \\ 
&= \frac{1}{2}||A{\theta^*} - b||_{{M^{ - 1}}}^2 = 0.
\end{align}
In this case, we also have\footnote{We may write the second inequality as an equality for our saddle-point problem defined by Eq.~\eqref{eq:sp}.}
\begin{align}
\min_\theta\;L(\theta,y) &\le \max_y\;\min_\theta\;L(\theta,y) \le \min_\theta\;\max_y\;L(\theta,y) \nonumber \\
&= \frac{1}{2}||A\theta^* - b||_{M^{-1}}^2 = 0.
\label{eq:thm1-1}
\end{align}
From Eq.~\eqref{eq:thm1-1}, for any $(\theta,y)\in\Theta\times Y$ including $(\bar{\theta}_n,\bar{y}_n)$, we may write
\begin{align}
\label{eq:err1}
{\rm{Err}}(\bar{\theta}_n,\bar{y}_n) &= \max_y\;L(\bar{\theta}_n,y) - \min_\theta\;L(\theta,\bar{y}_n) \\ 
&\ge \max_y\;L(\bar{\theta}_n,y) = \frac{1}{2}||A\bar{\theta}_n - b||_{M^{ - 1}}^2. \nonumber
\end{align}
Since $||A\bar{\theta}_n - b||_{\xi}^2 \le \tau \xi_{\max}\;||A\bar{\theta}_n - b||_{M^{ - 1}}^2$, where $\tau$ is the largest singular value of $M$, we have
\begin{equation}
\frac{1}{2}||A{\bar \theta _n} - b||_{\xi} ^2 \le \frac{{\tau {\xi _{\max }}}}{2}||A{\bar \theta _n} - b||_{{M^{ - 1}}}^2 \le \tau {\xi _{\max }}\;{\rm{Err}}({\bar \theta _n},{\bar y_n}).
\label{eq:err3} 
\end{equation}
The proof follows by combining Eq.~\eqref{eq:err3} and Proposition~\ref{pro:hp}.
\end{proof}

With the results of Proposition~\ref{pro:hp} and Theorem~\ref{thm:1}, we are now ready to derive finite-sample bounds on the performance of GTD/GTD2 in both on-policy and off-policy settings.


\subsubsection{On-Policy Performance Bound}

In this section, we consider the on-policy setting in which the behavior and target policies are equal, i.e.,~$\pi_b=\pi$, and the sampling distribution $\xi$ is the stationary distribution of the target policy $\pi$ (and the behavior policy $\pi_b$). We use Lemma~\ref{lem:v} to derive our on-policy bound. The proof of this lemma can be found in the work of~\citeauthor{DantzigRL:2012}~\citeyear{DantzigRL:2012}.

\begin{lemma}
\label{lem:v}
For any parameter vector $\theta$ and corresponding $\hat v = \Phi \theta $, the following equality holds 
\begin{equation}
V - \hat v = {(I - \gamma \Pi P)^{ - 1}}\left[ {\left( {V - \Pi V} \right) + \Phi {C^{ - 1}}(b - A\theta )} \right].
\label{eq:vbound}
\end{equation}
\end{lemma}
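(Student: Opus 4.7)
The plan is to prove the identity by adding and subtracting $\Pi V$ and $\Pi T\hat v$ inside $V-\hat v$, so that each resulting piece has a clean interpretation. Specifically, I will start from the telescoping decomposition
\begin{equation}
V-\hat v \;=\; (V-\Pi V)\;+\;(\Pi V-\Pi T\hat v)\;+\;(\Pi T\hat v-\hat v),
\end{equation}
and handle the second and third terms separately before collecting everything into a fixed-point equation for $V-\hat v$.

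For the middle term, I would use the Bellman fixed-point equation $V=TV=R+\gamma PV$ together with the linearity of $T$ (so that $TV-T\hat v=\gamma P(V-\hat v)$) and the linearity of $\Pi$, giving
\begin{equation}
\Pi V-\Pi T\hat v \;=\; \Pi(TV-T\hat v) \;=\; \gamma\,\Pi P(V-\hat v).
\end{equation}
For the third term, since $\hat v=\Phi\theta\in\mathcal{F}$ is a fixed point of the projection, $\Pi\hat v=\hat v$, so $\Pi T\hat v-\hat v=\Pi(T\hat v-\hat v)$. I then plug in the explicit form $\Pi=\Phi C^{-1}\Phi^\top\Xi$ (on-policy, where $C=\Phi^\top\Xi\Phi$) and apply the importance-weighting identity of Lemma~\ref{lem:e}, $\Phi^\top\Xi(T\hat v-\hat v)=b-A\theta$, to obtain
\begin{equation}
\Pi T\hat v-\hat v \;=\; \Phi C^{-1}\Phi^\top\Xi(T\hat v-\hat v) \;=\; \Phi C^{-1}(b-A\theta).
\end{equation}

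Substituting these two simplifications back into the decomposition yields
\begin{equation}
V-\hat v \;=\; (V-\Pi V)\;+\;\gamma\,\Pi P(V-\hat v)\;+\;\Phi C^{-1}(b-A\theta),
\end{equation}
which, after moving the $\gamma\Pi P(V-\hat v)$ term to the left-hand side, gives
\begin{equation}
(I-\gamma\Pi P)(V-\hat v) \;=\; (V-\Pi V)+\Phi C^{-1}(b-A\theta).
\end{equation}
Inverting $(I-\gamma\Pi P)$ finishes the proof. Standard on-policy arguments (e.g., $\Pi P$ is a non-expansion in the $\xi$-weighted norm so $\gamma\Pi P$ is a contraction, making $I-\gamma\Pi P$ invertible) justify the inversion, and this is the only step one needs to be careful about; the rest is just bookkeeping.

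The main obstacle, such as it is, is recognizing the right three-way split — in particular, inserting $\Pi T\hat v$ as the bridge between $\Pi V$ and $\hat v$ — since this is what simultaneously surfaces a contraction in $V-\hat v$ (via $TV-T\hat v=\gamma P(V-\hat v)$) and exposes the projected Bellman residual $\Pi(T\hat v-\hat v)$ that equals $\Phi C^{-1}(b-A\theta)$. Once that decomposition is written down, the rest is a one-line rearrangement.
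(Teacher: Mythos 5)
Your proof is correct: the three-way split via $\Pi T\hat v$, the use of $V=TV$ and linearity to get the $\gamma\Pi P(V-\hat v)$ term, the identification $\Pi(T\hat v-\hat v)=\Phi C^{-1}(b-A\theta)$ through Lemma~\ref{lem:e}, and the inversion of $I-\gamma\Pi P$ (valid in the on-policy setting where $\Pi P$ is a $\xi$-non-expansion) together give exactly the stated identity. The paper itself does not spell out a proof but defers to the cited reference, and your argument is the standard derivation given there, so there is nothing to flag.
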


Using Lemma~\ref{lem:v}, we derive the following performance bound for GTD/GTD2 in the on-policy setting.

\begin{proposition}
\label{pro:4}
Let $V$ be the value of the target policy and ${{\bar v}_n} = \Phi {{\bar \theta }_n}$, where $\bar{\theta}_n$ defined by Eq.~\eqref{eq:bartheta}, be the value function returned by on-policy GTD/GTD2. Then, with probability at least $1-\delta$, we have
\begin{equation}
||V - {\bar v_n}|{|_\xi } \le \frac{1}{{1 - \gamma }}\left( {||V - \Pi V|{|_\xi } + \frac{L}{\nu }\sqrt {2d\tau {\xi _{\max }}{\rm{Err}}({{\bar \theta }_n},{{\bar y}_n})} } \right)
\label{eq:pro4}
\end{equation}
%
%
where $\rm{Err}(\bar \theta_n,\bar y_n)$ is upper-bounded by Eq.~\eqref{eq:hp} in Proposition~\ref{pro:hp}, with $\rho_{\max}=1$ (on-policy setting).
\end{proposition}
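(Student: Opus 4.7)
The plan is to apply Lemma~\ref{lem:v} directly with $\theta = \bar{\theta}_n$, obtaining the decomposition
\begin{equation}
V - \bar{v}_n = (I-\gamma\Pi P)^{-1}\bigl[(V - \Pi V) + \Phi C^{-1}(b - A\bar{\theta}_n)\bigr],
\end{equation}
then take $\|\cdot\|_\xi$ on both sides and apply the triangle inequality. This immediately splits the error into the intrinsic approximation term $\|V - \Pi V\|_\xi$ and a term involving $b - A\bar\theta_n$, which is exactly what Theorem~\ref{thm:1} controls.

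The first task is to bound the operator norm $\|(I-\gamma\Pi P)^{-1}\|_\xi \le \frac{1}{1-\gamma}$. In the on-policy case, $\xi$ is stationary for $P$, so by Jensen's inequality $\|Pu\|_\xi \le \|u\|_\xi$, while $\Pi$ is an orthogonal projection in the $\xi$-inner product and therefore also $\xi$-non-expansive. Hence $\|\gamma\Pi P\|_\xi \le \gamma$ and the Neumann series yields the stated bound. This step is standard and not the hard part.

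The second, more delicate step is to bound $\|\Phi C^{-1}(b - A\bar\theta_n)\|_\xi$ in terms of $\mathrm{Err}(\bar\theta_n,\bar y_n)$. Setting $u = b - A\bar\theta_n$, I would first observe the identity
\begin{equation}
\|\Phi C^{-1} u\|_\xi^2 = (C^{-1}u)^\top\Phi^\top\Xi\Phi(C^{-1}u) = u^\top C^{-1} u,
\end{equation}
and then convert to the $\ell_2$-norm using $\|u\|_{C^{-1}}^2 \le \|u\|_2^2/\nu$, and further to a $\xi$-weighted norm via the pointwise bound $|\phi_i(s)|\le L$, giving something of the form $\|\Phi x\|_\xi \le L\sqrt{d}\,\|x\|_2$ (combined with the $\xi_{\max}$ factor from Theorem~\ref{thm:1}). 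Plugging in $\|A\bar\theta_n - b\|_\xi^2 \le 2\tau\xi_{\max}\,\mathrm{Err}(\bar\theta_n,\bar y_n)$ from Theorem~\ref{thm:1} produces the factor $\frac{L}{\nu}\sqrt{2d\tau\xi_{\max}\,\mathrm{Err}(\bar\theta_n,\bar y_n)}$ appearing in \eqref{eq:pro4}.

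The main obstacle is reconciling the different norms cleanly: one must keep track of the $\xi$-weighted norm on $|\mathcal{S}|$-dimensional objects (where $\Pi$ is a projection) versus the $\ell_2$ or $C^{-1}$ norms on the $d$-dimensional parameter-space residual $b - A\bar\theta_n$, and convert between them without losing the correct dependence on $L$, $d$, $\nu$, and $\xi_{\max}$. Once these conversions are justified by the boundedness of features in Assumption~\ref{ass:bound} and the non-singularity of $C$ in Assumption~\ref{ass:C}, combining with Proposition~\ref{pro:hp} (specialized to $\rho_{\max}=1$ on-policy) finishes the proof.
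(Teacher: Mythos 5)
Your overall route is the same as the paper's: apply Lemma~\ref{lem:v} at $\theta=\bar\theta_n$, take $\|\cdot\|_\xi$ with the triangle inequality, and bound $\|(I-\gamma\Pi P)^{-1}\|_\xi\le \tfrac{1}{1-\gamma}$ exactly as you argue. The gap is in the last step, where your chain of norm conversions does not compose. After your (correct, and in fact exact) identity $\|\Phi C^{-1}u\|_\xi^2=u^\top C^{-1}u$ with $u=b-A\bar\theta_n$, the matrix $\Phi$ has already been absorbed, so invoking $\|\Phi x\|_\xi\le L\sqrt d\,\|x\|_2$ afterwards has nothing to act on; and the quantity you propose to ``plug in,'' namely the bound $\|A\bar\theta_n-b\|_\xi^2\le 2\tau\xi_{\max}\,\mathrm{Err}(\bar\theta_n,\bar y_n)$ from Theorem~\ref{thm:1}, cannot be used here: you would need to pass from $\|u\|_2$ (or $\|u\|_{C^{-1}}$) \emph{up} to the $\xi$-weighted norm of $u$, and no such comparison is established (for weighted norms with weights at most one, the natural inequality runs the other way). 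The ingredient actually required is the intermediate estimate inside the proof of Theorem~\ref{thm:1}, i.e.\ $\tfrac12\|A\bar\theta_n-b\|_{M^{-1}}^2\le \mathrm{Err}(\bar\theta_n,\bar y_n)$ from Eq.~\eqref{eq:err1}, combined with $\|u\|_2^2\le\tau\|u\|_{M^{-1}}^2$. The paper's chain is $\|\Phi C^{-1}u\|_\xi\le\sqrt{\xi_{\max}}\,\|\Phi C^{-1}u\|_2\le\sqrt{\xi_{\max}}\,\|\Phi\|_2\,\|C^{-1}\|_2\,\|u\|_2\le \sqrt{\xi_{\max}}\,(L\sqrt d)\tfrac1\nu\sqrt{\tau}\sqrt{2\,\mathrm{Err}(\bar\theta_n,\bar y_n)}$, which is exactly the factor in \eqref{eq:pro4}.

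A further sign that the sketch has not actually derived the stated bound: if you carry your exact identity through consistently, $\|\Phi C^{-1}u\|_\xi^2=\|u\|_{C^{-1}}^2\le\tfrac{1}{\nu}\|u\|_2^2\le\tfrac{\tau}{\nu}\|u\|_{M^{-1}}^2\le\tfrac{2\tau}{\nu}\mathrm{Err}(\bar\theta_n,\bar y_n)$, you obtain $\|\Phi C^{-1}(b-A\bar\theta_n)\|_\xi\le\sqrt{2\tau\,\mathrm{Err}(\bar\theta_n,\bar y_n)/\nu}$, with no $L$, $d$, or $\xi_{\max}$ at all --- a different (indeed sharper) right-hand side than \eqref{eq:pro4}, not the one you claim to recover. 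So either commit to the identity and prove that sharper statement, or drop it and follow the looser $\sqrt{\xi_{\max}}$, $\|\Phi\|_2$, $\|C^{-1}\|_2$ chain the paper uses; as written, the proposal mixes the two and the factor $\tfrac{L}{\nu}\sqrt{2d\tau\xi_{\max}\mathrm{Err}(\bar\theta_n,\bar y_n)}$ does not follow from the steps you give.
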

\begin{proof}
See the Appendix. 
\end{proof}

\noindent \textbf{Remark:}
It is important to note that Proposition~\ref{pro:4} shows that the error in the performance of the GTD/GTD2 algorithm in the on-policy setting is of $O\left( {\frac{{{L^2}d\sqrt {\tau {\xi _{\max }}\log \frac{1}{\delta }} }}{{{n^{1/4}\nu}}}} \right)$. Also note that the term $\frac{\tau}{\nu}$ in the GTD2 bound is the conditioning number of the covariance matrix $C$.


\subsubsection{Off-Policy Performance Bound}

In this section, we consider the off-policy setting in which the behavior and target policies are different, i.e.,~$\pi_b\neq\pi$, and the sampling distribution $\xi$ is the stationary distribution of the behavior policy $\pi_b$. We assume that off-policy fixed-point solution exists, i.e.,~there exists a $\theta^*$ satisfying $A\theta^*=b$. Note that this is a direct consequence of Assumption~\ref{ass:C} in which we assumed that the matrix $A$ in the off-policy setting is non-singular. We use Lemma~\ref{lem:kolter} to derive our off-policy bound. The proof of this lemma can be found in the work of~\citeauthor{Kolter:offpolicyTD}~\citeyear{Kolter:offpolicyTD}. Note that $\kappa (\bar D)$ in his proof is equal to $\sqrt{\rho_{\max}}$ in our paper.
\begin{lemma}
\label{lem:kolter}
If $\Xi$ satisfies the following linear matrix inequality 
\begin{align}
\left[ {\begin{array}{*{20}{c}}
{{\Phi ^\top}\Xi \Phi }&{{\Phi ^\top}\Xi P\Phi }\\
{{\Phi ^\top}{P^\top}\Xi \Phi }&{{\Phi ^\top}\Xi \Phi }
\end{array}} \right] \succeq 0
\label{eq:kolterlmi}
\end{align}
and let $\theta^*$ be the solution to $A\theta^* = b$, 
 then we have
\begin{align}
||V - \Phi \theta^* |{|_{\xi} } \le \frac{{1 + \gamma \sqrt {{\rho _{\max }}} }}{{1 - \gamma }}||V - \Pi V|{|_{\xi} }.
\label{eq:kolter2}
\end{align}
\end{lemma}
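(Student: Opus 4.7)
My approach will be to reduce the bound to the classical projected-Bellman fixed-point analysis. Using Lemma~\ref{lem:e}, the identity $A\theta^*=b$ is equivalent to the projected fixed-point equation $\Phi\theta^* = \Pi T\Phi\theta^*$, where $\Pi=\Phi(\Phi^\top\Xi\Phi)^{-1}\Phi^\top\Xi$ is the $\xi$-weighted orthogonal projection onto the column space of $\Phi$ and $T$ is the target-policy Bellman operator. I would then decompose
\begin{equation}
V - \Phi\theta^* = (V - \Pi V) + (\Pi V - \Phi\theta^*),
\end{equation}
and, using $V=TV$ together with the projected fixed-point equation, rewrite the second summand as
\begin{equation}
\Pi V - \Phi\theta^* \;=\; \Pi(TV - T\Phi\theta^*) \;=\; \gamma\,\Pi P(V - \Pi V) \;+\; \gamma\,\Pi P(\Pi V - \Phi\theta^*).
\end{equation}

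The crux is then two non-expansivity estimates. For the in-subspace term, I would read the hypothesized LMI~\eqref{eq:kolterlmi} as the Schur-complement form of $\Phi^\top\Xi\Phi \succeq \Phi^\top P^\top\Xi\Phi\,(\Phi^\top\Xi\Phi)^{-1}\,\Phi^\top\Xi P\Phi$, which, after unfolding the definition of $\Pi$, translates to $\|\Pi P \Phi w\|_\xi \le \|\Phi w\|_\xi$ for every $w\in\mathbb{R}^d$; in particular this applies to $\Pi V-\Phi\theta^*$, which lies in the column space of $\Phi$. For the residual term, a Cauchy--Schwarz/Jensen argument exploiting $\rho(s,a)\le\rho_{\max}$ together with the stationarity of $\xi$ under the behavior kernel $P^{\pi_b}$ should yield $\|Pu\|_\xi \le \sqrt{\rho_{\max}}\,\|u\|_\xi$ for every $u$, and composing with the non-expansivity of $\Pi$ gives $\|\Pi P(V-\Pi V)\|_\xi\le\sqrt{\rho_{\max}}\,\|V-\Pi V\|_\xi$.

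Plugging both bounds into the recursion for $\Pi V-\Phi\theta^*$ and rearranging produces $(1-\gamma)\|\Pi V - \Phi\theta^*\|_\xi \le \gamma\sqrt{\rho_{\max}}\,\|V-\Pi V\|_\xi$, after which the triangle inequality $\|V-\Phi\theta^*\|_\xi \le \|V-\Pi V\|_\xi + \|\Pi V - \Phi\theta^*\|_\xi$ yields the advertised factor once we note $1-\gamma+\gamma\sqrt{\rho_{\max}} \le 1+\gamma\sqrt{\rho_{\max}}$.

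I expect the main obstacle to be the LMI step: the Schur complement only delivers $\xi$-non-expansivity of $\Pi P$ on the column space of $\Phi$, not on the whole state space, so the argument has to be structured precisely so that the ``in-subspace'' contribution is handled by the LMI while the ``out-of-subspace'' residual $V-\Pi V$ is left for the weaker $\sqrt{\rho_{\max}}$ bound. That second bound in turn relies in an essential way on $\xi$ being the stationary distribution of $\pi_b$ rather than of $\pi$, so one must be careful to change measure with the importance ratio before applying stationarity.
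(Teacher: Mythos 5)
Your proof sketch is correct, but note that the paper never actually proves this lemma: it is imported verbatim from \citeauthor{Kolter:offpolicyTD}~\citeyear{Kolter:offpolicyTD}, with only the remark that the quantity $\kappa(\bar D)$ appearing in that proof corresponds to $\sqrt{\rho_{\max}}$ in the paper's notation. Your argument reconstructs essentially the same skeleton as the cited proof --- identify $A\theta^*=b$ with the projected fixed point $\Phi\theta^*=\Pi T\Phi\theta^*$, split around $\Pi V$, read the LMI~\eqref{eq:kolterlmi} via the Schur complement (using the non-singularity of $C=\Phi^\top\Xi\Phi$ from Assumption~\ref{ass:C}) as $\|\Pi P\Phi w\|_\xi\le\|\Phi w\|_\xi$ on the feature subspace, bound the out-of-subspace term, and rearrange the recursion --- but it differs in the one step where the constant arises. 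The cited proof expresses that step through a condition number $\kappa(\bar D)$ relating the sampling distribution to the target policy's stationary distribution, and the identification $\kappa(\bar D)=\sqrt{\rho_{\max}}$ is simply asserted by the paper; you instead use $\pi(a|s)\le\rho_{\max}\pi_b(a|s)$, Jensen's inequality, and stationarity of $\xi$ under $P^{\pi_b}$ to obtain $\|Pu\|_\xi\le\sqrt{\rho_{\max}}\|u\|_\xi$ directly. This buys a self-contained justification of the constant $\sqrt{\rho_{\max}}$ within the paper's own notation and assumptions, and it makes explicit a hypothesis that the lemma statement leaves implicit but the surrounding off-policy analysis assumes, namely that $\xi$ is the stationary distribution of the behavior policy $\pi_b$ (your bound on $\|Pu\|_\xi$ genuinely needs this, so it should be stated as a hypothesis if the lemma is read in isolation). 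As a minor bonus, your chain of inequalities yields the slightly sharper constant $\big(1-\gamma+\gamma\sqrt{\rho_{\max}}\big)/(1-\gamma)$, which you correctly relax to the stated $\big(1+\gamma\sqrt{\rho_{\max}}\big)/(1-\gamma)$.
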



Note that the condition on $\Xi$ in Eq.~\eqref{eq:kolterlmi} guarantees that the behavior and target policies are not too far away from each other. Using Lemma~\ref{lem:kolter}, we derive the following performance bound for GTD/GTD2 in the off-policy setting.

\begin{proposition}
\label{pro:5}
Let $V$ be the value of the target policy and ${{\bar v}_n} = \Phi {{\bar \theta }_n}$, where $\bar{\theta}_n$ is defined by~\eqref{eq:bartheta}, be the value function returned by off-policy GTD/GTD2. Also let the sampling distribution $\Xi$ {satisfy} the condition in Eq.~\eqref{eq:kolterlmi}. Then, with probability at least $1-\delta$, we have
\begin{align}
\label{eq:pro5}
||V - {\bar v_n}|{|_\xi } & \le \frac{{1 + \gamma \sqrt {{\rho _{\max }}} }}{{1 - \gamma }}||V - \Pi V|{|_\xi }\\
\nonumber
&+ \sqrt {\frac{{2{\tau _C}\tau {\xi _{\max }}}}{{{\sigma _{\min }}({A^ \top }{M^{ - 1}}A)}}{\rm{Err}}({{\bar \theta }_n},{{\bar y}_n})},
\end{align}
where ${\tau _C} = \sigma_{\max} (C)$. 
\end{proposition}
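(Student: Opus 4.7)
The plan is to decompose $\|V - \bar v_n\|_\xi$ around the off-policy fixed point $\theta^*$ (whose existence follows from Assumption~\ref{ass:C}, since $A$ is non-singular) and control the two resulting pieces separately. By the triangle inequality in the $\xi$-weighted norm,
\begin{equation}
\|V - \bar v_n\|_\xi \;\le\; \|V - \Phi\theta^*\|_\xi + \|\Phi(\theta^* - \bar\theta_n)\|_\xi.
\end{equation}
The first, purely approximation-theoretic, term is handled immediately by Lemma~\ref{lem:kolter} under the given linear matrix inequality on $\Xi$, yielding the $\tfrac{1+\gamma\sqrt{\rho_{\max}}}{1-\gamma}\|V - \Pi V\|_\xi$ contribution that appears in the statement.

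For the second, estimation-theoretic, term I would chain three standard inequalities. First, convert from the $\xi$-weighted state norm to the Euclidean parameter norm using the covariance matrix:
\begin{equation}
\|\Phi x\|_\xi^2 \;=\; x^\top C x \;\le\; \tau_C\,\|x\|_2^2, \qquad \tau_C = \sigma_{\max}(C).
\end{equation}
Second, use the fixed-point identity $A\theta^* = b$ to write $b - A\bar\theta_n = A(\theta^* - \bar\theta_n)$ and invert through the smallest eigenvalue of $A^\top M^{-1}A$:
\begin{equation}
\|\theta^* - \bar\theta_n\|_2^2 \;\le\; \frac{\|A(\theta^* - \bar\theta_n)\|_{M^{-1}}^2}{\sigma_{\min}(A^\top M^{-1} A)} \;=\; \frac{\|b - A\bar\theta_n\|_{M^{-1}}^2}{\sigma_{\min}(A^\top M^{-1} A)}.
\end{equation}
Third, invoke Theorem~\ref{thm:1} (together with the intermediate step $\tfrac12\|A\bar\theta_n-b\|_{M^{-1}}^2 \le \mathrm{Err}(\bar\theta_n,\bar y_n)$ appearing in its proof and the $\tau\xi_{\max}$ factor arising from the $\xi$-to-$M^{-1}$ norm comparison) to convert $\|b - A\bar\theta_n\|^2$ into a constant multiple of $\mathrm{Err}(\bar\theta_n,\bar y_n)$. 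Concatenating these three steps and taking square roots produces the $\sqrt{2\tau_C\tau\xi_{\max}\,\mathrm{Err}/\sigma_{\min}(A^\top M^{-1} A)}$ summand; the high-probability content is inherited directly from Proposition~\ref{pro:hp}.

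The main obstacle is bookkeeping rather than insight: one must thread the three weighted norms ($\ell_2$ on parameters, $M^{-1}$ on residuals, and $\xi$ on state-space functions) in exactly the right order so that the constants $\tau_C$, $\tau$, $\xi_{\max}$, and $\sigma_{\min}(A^\top M^{-1}A)$ combine multiplicatively as stated, without inadvertently picking up extra factors or losing the $1-\delta$ confidence level when Proposition~\ref{pro:hp} is plugged in at the end. Everything else—the triangle inequality, Kolter's bound, and the inversion of the linear system via non-singularity of $A$—is routine once the norm chain is fixed.
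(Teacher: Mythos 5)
Your proposal is correct and follows essentially the same route as the paper's own proof: the identical triangle-inequality decomposition around $\Phi\theta^*$, Lemma~\ref{lem:kolter} for the approximation term, and for the estimation term the same norm chain $\|\Phi(\bar\theta_n-\theta^*)\|_\xi^2=\|\bar\theta_n-\theta^*\|_C^2\le \tau_C\|\bar\theta_n-\theta^*\|_2^2\le \tau_C\,\sigma_{\min}(A^\top M^{-1}A)^{-1}\|A\bar\theta_n-b\|_{M^{-1}}^2$, finished by invoking Theorem~\ref{thm:1} (with the $\tau\xi_{\max}$ factor) and inheriting the $1-\delta$ confidence from Proposition~\ref{pro:hp}. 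Nothing further is needed.
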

\begin{proof}
See the Appendix. 
\end{proof}

\subsection{Accelerated Algorithm}

As discussed at the beginning of Section~\ref{sec:saddle-point}, this saddle-point formulation not only gives us the opportunity to use the techniques for the analysis of SG methods to derive finite-sample performance bounds for the GTD algorithms, as we showed in Section~\ref{sec:analysis}, but {it also} allows us to use the powerful algorithms that have been recently developed to solve the SG problems and derive more efficient versions of GTD and GTD2. Stochastic Mirror-Prox (SMP)~\cite{juditsky2008solving} is an ``almost dimension-free'' non-Euclidean extra-gradient method that deals with both smooth and non-smooth stochastic optimization problems (see~\citeR{sra2011optimization} and~\citeR{bubeck2014optml} for more details). Using SMP, we propose a new version of GTD/GTD2, called GTD-MP/GTD2-MP, with the following update formula:\footnote{For simplicity, we only describe mirror-prox GTD methods where the mirror map is identity, which can also be viewed as extragradient (EG) GTD methods. \citeauthor{proximalrl}~\citeyear{proximalrl} give a more detailed discussion of a  broad range of mirror maps in RL.} 
\begin{align}
y_t^m &= y_t + \alpha _t(\hat b_t - \hat  A_t\theta _t - \hat  M_ty_t), \quad\;\;\; \theta_t^m = \theta_t + \alpha_t \hat A_t^\top y_t, \\
y_{t + 1} &= y_t + \alpha_t(\hat b_t - \hat A_t\theta_t^m - \hat M_ty_t^m), \; \theta_{t + 1} = \theta_t + \alpha_t \hat A_t^\top y_t^m.
\end{align}
After $T$ iterations, these algorithms return ${\bar \theta _T}: = \frac{{\sum\nolimits_{t = 1}^T {{\alpha _t}{\theta _t}} }}{{\sum\nolimits_{t = 1}^T {{\alpha _t}} }}$ and ${\bar y_T}: = \frac{{\sum\nolimits_{t = 1}^T {{\alpha _t}{y_t}} }}{{\sum\nolimits_{t = 1}^T {{\alpha _t}} }}$. 
The details of the algorithm are shown in Algorithm~\ref{alg:GTD2-MP}, and the experimental comparison study between GTD2 and GTD2-MP is reported in Section~\ref{sec:exp}.

\begin{algorithm}
\caption{GTD2-MP}
\label{alg:GTD2-MP} 
\begin{algorithmic}[1]
\FOR {$t=1,\ldots,n$}
\STATE Update parameters

\begin{align}
\nonumber
{\delta_{t}(\theta_t)} &= {r_t} - \theta _t^\top \Delta {\phi _t}\\
\nonumber
y_t^m &= {y_t} + {\alpha _t}(\rho_t{\delta _t} - \phi _t^\top {y_t}){\phi _t}\\
\nonumber
\theta _t^m &= {\theta _t} + {\alpha _t}\rho_t\Delta {\phi _t}(\phi _t^\top {y_t})\\
\nonumber
\delta _t^m (\theta _t^m) &= {r_t} - (\theta _t^m)^\top \Delta {\phi _t}\\
\nonumber
{y_{t + 1}} &= {y_t} + {\alpha _t}(\rho_t\delta _t^m - \phi _t^\top y_t^m){\phi _t}\\
\nonumber
{\theta _{t + 1}} &= {\theta _t} + {\alpha _t}\rho_t\Delta {\phi _t}(\phi _t^\top y_t^m)
\end{align}

\ENDFOR
\STATE OUTPUT

\begin{equation}
{\bar \theta _n} := \frac{{\sum\nolimits_{t = 1}^n {{\alpha _t}{\theta _t}} }}{{\sum\nolimits_{t = 1}^n {{\alpha _t}} }}
\quad , \quad
{\bar y _n} := \frac{{\sum\nolimits_{t = 1}^n {{\alpha _t}{y _t}} }}{{\sum\nolimits_{t = 1}^n {{\alpha _t}} }}
\end{equation}

\end{algorithmic}
\end{algorithm}

\section{Further Analysis}
In this section, we discuss different aspects of the proximal gradient TD framework from several perspectives, such as acceleration, learning with inexact importance weight factor $\rho_t$, finite-sample analysis with Markov sampling condition, and discussion of TDC algorithm.

\subsection{Acceleration Analysis}
\label{sec:acceleration}
In this section, we are going to discuss the convergence rate of the accelerated algorithms using off-the-shelf accelerated solvers for saddle-point problems. For simplicity, we will discuss the error bound of $\frac{1}{2}||A\theta  - b||_{{M^{ - 1}}}^2$, and the corresponding error bound of $\frac{1}{2}||A\theta  - b||_{\xi}^2$ and $\|V - {\bar v}_n|{|_{\xi} }$ can be likewise derived.
As can be seen from the above analysis, the convergence rate of the GTD algorithms family is
\begin{equation}
({\bf{GTD}}/{\bf{GTD2}}):\quad O\left( {\frac{{\tau  + ||A|{|_2}  + \sigma }}{{\sqrt n }}} \right).
\end{equation}
In this section, we raise an interesting question: what is the ``optimal" GTD algorithm? To answer this question, we review the convex-concave formulation of GTD2. 
According to convex programming complexity theory~\cite{juditsky2008solving},  the un-improvable convergence rate of {the} stochastic saddle-point problem~\eqref{eq:sp} is 
%
%

\begin{equation}
({\bf{Optimal}}):\quad O\left( {\frac{{\tau}}{{{n^2}}} + \frac{{||A|{|_2}}}{n} + \frac{\sigma }{{\sqrt n }}} \right).
\label{eq:rateoptimal}
\end{equation}
There are many readily available stochastic saddle-point solvers, such as {the} stochastic Mirror-Prox  (SMP)~\cite{juditsky2008solving} algorithm, which leads to our proposed GTD2-MP algorithm.  GTD2-MP is able to accelerate the convergence rate of our gradient TD method to: 
%
\begin{equation}
({\bf{GTD2-MP}}):\quad O\left( {\frac{{\tau  + ||A|{|_2} }}{n} + \frac{\sigma }{{\sqrt n }}} \right).
\end{equation}


\subsection{Learning with Biased $\rho_t$}

The importance weight factor $\rho_t$ is lower bounded by $0$, but yet may have an arbitrarily large upper bound.
In real applications, the importance weight factor $\rho_t$ may not be estimated exactly, i.e., the estimation $\hat{\rho}_t$ is a biased estimation of the true $\rho_t$.  To this end, the stochastic gradient we obtained is not the unbiased gradient of $L(\theta,y)$ anymore.
  This falls into a broad category of learning with inexact stochastic gradient, or termed as stochastic gradient methods with an inexact oracle~\cite{inexact:stochastic:devolder2011}. Given the inexact stochastic gradient, the convergence rate and performance bound become much worse than the results with exact stochastic gradient. Based on the analysis by~\citeauthor{juditsky2008solving}~\citeyear{juditsky2008solving}, we have the error bound for inexact estimation of $\rho_t$.
\begin{proposition}
Let ${{\bar{\theta}  }_n}$ be defined as above. Assume at the $t$-th iteration, $\hat{\rho}_t$ is the estimation of the importance weight factor $\rho_t$ with bounded bias such that 
$
\mathbb{E}[ \hat{\rho}_t  - {\rho _t} ] \le \epsilon. 
$ 
The convergence rates of GTD/GTD2 algorithms with iterative averaging {are} as follows,
\begin{equation}
 ||A{\bar \theta _n} - b||_{{M^{ - 1}}}^2 \le O\left( {\frac{{\tau  + ||A||_2  + \sigma }}{{\sqrt n }}} \right) +O(\epsilon).
\label{eq:biasrho}
\end{equation}
\end{proposition}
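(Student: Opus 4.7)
The plan is to reduce the proposition to the known convergence result for saddle-point problems with an inexact stochastic oracle, by quantifying how biasedness in $\hat{\rho}_t$ contaminates the unbiased estimators $\hat{A}_t$ and $\hat{b}_t$ used in Algorithm~\ref{alg:pgtd2}. Concretely, let $\tilde A_t = \hat{\rho}_t \phi_t \Delta\phi_t^\top$ and $\tilde b_t = \hat{\rho}_t r_t \phi_t$ denote the estimators actually used when only a biased importance weight is available. Writing $\hat{\rho}_t = \rho_t + \varepsilon_t$ with $\mathbb{E}[\varepsilon_t] \le \epsilon$, we have $\tilde A_t - \hat A_t = \varepsilon_t \phi_t \Delta\phi_t^\top$ and $\tilde b_t - \hat b_t = \varepsilon_t r_t \phi_t$. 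Because the features are uniformly bounded by $L$ and rewards by $R_{\max}$, we can bound the expected deviation of these estimators from the true $A$ and $b$ in norm by a constant multiple of $\epsilon$ (using Assumption~\ref{ass:bound} and arguments analogous to Lemma~\ref{lem:abbound}).

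Next I would plug this into the saddle-point update framework developed in Section~\ref{subsec:revised}. The gradient fields being estimated are $b - A\theta - My$ (for $y$) and $A^\top y$ (for $\theta$). With biased $\hat{\rho}_t$, these become $(b - A\theta - My) + \Delta_t^{(y)}$ and $A^\top y + \Delta_t^{(\theta)}$, where $\mathbb{E}\|\Delta_t^{(y)}\|$ and $\mathbb{E}\|\Delta_t^{(\theta)}\|$ are each of order $\epsilon$ (with constants depending on $L$, $R_{\max}$, and the feasible-set radius $R$). In the language of~\citeauthor{juditsky2008solving}~\citeyear{juditsky2008solving}, this is exactly a stochastic oracle with a deterministic bias component of size $O(\epsilon)$ and a zero-mean variance component that still satisfies the bounded-variance and light-tail conditions in~\eqref{eq:sigma123}--\eqref{eq:lt}.

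I would then invoke the inexact-oracle variant of Proposition~\ref{pro:hp}, which is the standard extension of Robust Stochastic Approximation to biased oracles (see also the inexact oracle analysis of~\citeauthor{inexact:stochastic:devolder2011}~\citeyear{inexact:stochastic:devolder2011}): the error function at the averaged iterate obeys
\begin{equation}
\mathrm{Err}(\bar\theta_n,\bar y_n) \le O\!\left(\frac{\tau + \|A\|_2 + \sigma}{\sqrt{n}}\right) + O(\epsilon),
\end{equation}
since the bias enters additively through the duality-gap telescoping, multiplied by the diameter of $\Theta\times Y$ (which is $O(R)$). Finally, combining this with Theorem~\ref{thm:1}, i.e., $\tfrac{1}{2}\|A\bar\theta_n-b\|_{M^{-1}}^2 \le \mathrm{Err}(\bar\theta_n,\bar y_n)$ (absorbing $\tau\xi_{\max}$ into the constants and using $\|\cdot\|_{M^{-1}}$ directly rather than $\|\cdot\|_\xi$), gives the claimed bound~\eqref{eq:biasrho}.

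The main obstacle will be the bookkeeping for the bias term: one must verify that the additive drift introduced by $\Delta_t^{(y)}, \Delta_t^{(\theta)}$ into the Robust SA potential-function telescoping really contributes only an $O(\epsilon)$ term independent of $n$, and not something that grows with $n$ or interacts badly with the $1/\sqrt{n}$ step-sizes. This requires using boundedness of $\Theta, Y$ so that the inner products $\langle \Delta_t^{(\cdot)}, \text{iterates}\rangle$ remain bounded by constants times $\epsilon$ at each step, and then averaging over the $n$ iterations with the chosen step-size schedule so that the bias contribution to the gap is $O(\epsilon)$ rather than $O(n\epsilon)$. Once that accounting is in place, the rest follows directly from the exact-oracle proof sketched in Proposition~\ref{pro:hp} and Theorem~\ref{thm:1}.
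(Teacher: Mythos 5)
Your proposal takes essentially the same route as the paper, which gives no detailed proof and simply asserts the bound via the inexact-stochastic-oracle analysis of Juditsky et al.\ (2008) and the inexact-oracle literature: you make that reduction explicit by showing the biased $\hat{\rho}_t$ turns the saddle-point gradient estimates into an oracle with an additive $O(\epsilon)$ bias (constants depending on $L$, $R_{\max}$, $R$), invoking the biased-oracle extension of the robust SA bound to get $\mathrm{Err}(\bar\theta_n,\bar y_n)\le O\big((\tau+\|A\|_2+\sigma)/\sqrt{n}\big)+O(\epsilon)$, and concluding with the inequality $\tfrac{1}{2}\|A\bar\theta_n-b\|_{M^{-1}}^2\le \mathrm{Err}(\bar\theta_n,\bar y_n)$ already established in the proof of Theorem~\ref{thm:1}. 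Your final accounting point --- that boundedness of $\Theta\times Y$ makes the bias contribute $O(\epsilon)$ rather than $O(n\epsilon)$ after step-size--weighted averaging --- is precisely the step the paper leaves implicit, so the proposal is a correct fleshed-out version of the paper's argument.
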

This implies that the inexact estimation of $\rho_t$ may  cause disastrous estimation error, which implies that an exact estimation of $\rho_t$ is very important.


\subsection{Finite-Sample Analysis of Online Learning}
Another more challenging scenario is the online learning scenario, where the samples are interactively generated by the environment, or by an interactive agent. The difficulty lies in that the sample distribution does not follow the i.i.d sampling condition anymore, but follows an underlying Markov chain $\mathcal{M}$. If the Markov chain $\mathcal{M}$'s mixing time is small enough, i.e., the sample distribution reduces to the stationary distribution of $\pi_b$ very fast, our analysis still applies. However, it is usually the case that the underlying Markov chain's mixing time $\tau_{\rm{mix}}$ is not small enough. The analysis can be conducted by extending the result of recent work~\cite{duchi2012ergodic} from strongly convex loss functions to saddle-point problems. 
Following this line of research, \citeauthor{td-finite:wang:2017}~\citeyear{td-finite:wang:2017} conducted the finite-sample analysis of GTD2(0) algorithms in the Markov noise setting, which is the same in convergence rate order but different in the constant factors. 

\subsection{Discussion of TDC Algorithm}
\label{sec:tdc}

Now we discuss the limitation of our analysis with regard to the temporal difference with correction (TDC) algorithm~\cite{tdc:2009}. 
Interestingly, the TDC algorithm seems not to have an explicit saddle-point representation, since it incorporates the information of the optimal $y_{t}^{*}(\theta_{t})$ into the update of $\theta_t$, a quasi-stationary condition which is commonly used in two-time-scale stochastic approximation approaches.
 An intuitive answer to the advantage of TDC over GTD2 is that
the TDC update of $\theta_{t}$ can be considered as incorporating
the prior knowledge into the update rule: for a stationary $\theta_{t}$,
if the optimal  $y_{t}^{*}(\theta_{t})$
has a closed-form solution or is easy to compute, then incorporating
this $y_{t}^{*}(\theta_{t})$ into the update law tends to accelerate
the algorithm's convergence performance. For the GTD2 update, note that there is a sum of two terms where $y_{t}$
appears, which are $\rho_t({\phi_{t}}-\gamma\phi_{t}^{\prime})(y_{t}^\top{\phi_{t}})={\rho_t\phi_{t}}(y_{t}^\top{\phi_{t}})-\gamma\rho_t\phi_{t}^{\prime}(y_{t}^\top{\phi_{t}})$. Replacing $y_{t}$ in the first term with $y^*_{t}(\theta_t)=\mathbb{E}{[{\phi_{t}}\phi_{_{t}}^\top]^{-1}}\mathbb{E}[\rho_t{\delta_{t}}(\theta_t){\phi_{t}}]$,
we have the TDC update rule. 
There are two key factors that impedes the finite-sample analysis of TDC algorithm with the saddle-point approach.
Firstly, in contrast to GTD/GTD2, TDC is a two-time scale algorithm where $\mathop {\lim }\limits_{t \to \infty } \frac{{{\alpha _t}}}{{{\beta _t}}} = 0$. Secondly, note that TDC does not minimize \emph{any} objective functions, thus does not have a stochastic primal-dual formulation as GTD and GTD2, and the asymptotic convergence of TDC requires more restrictions than GTD2 as shown by~\citeauthor{tdc:2009}~\citeyear{tdc:2009}.

\subsection{Recent Results of Related Work}
The proximal gradient temporal difference learning framework introduces many stochastic optimization techniques that facilitate theoretical analysis of reinforcement learning algorithms, such as Polyak's iterative averaging, projections, and constant stepsizes, which were first introduced by~\citeauthor{ROTD:NIPS2012}~\citeyear{ROTD:NIPS2012}.
We briefly review several significant research advances on the finite-sample analysis of linear temporal difference learning algorithms since the finite-sample analysis of GTD algorithms which was first published~\cite{liu2015uai}. 
The first line of work is the analysis of the GTD algorithm family~\cite{td-finite:dalal:2018,td-finite:colt:dalal:2018} with different learning settings. ~\citeauthor{td-finite:colt:dalal:2018}~\citeyear{td-finite:colt:dalal:2018} conducted a finite-sample analysis of the two-time-scale GTD, GTD2, and TDC algorithms using a concentration bound for stochastic approximation methods via Alekseev's Formula~\cite{td-finite:kamal:2010,td-finite:thoppe:2015}.
Using this approach, the convergence rate of GTD2 w.r.t \textit{mean-square error} (MSE) $||V-\hat{v}_n||^2_{\xi}$ proposed in the work of~\citeauthor{td-finite:colt:dalal:2018}~\citeyear{td-finite:colt:dalal:2018} is $\tilde O({n^{ - (1 - \chi )\frac{2}{3}}})$, where $\chi$ is a tuning parameter that influences the stepsizes used by the algorithms, and a special ``sparse projection'' is used. This framework enables the finite-sample analysis of the TDC algorithm, which cannot be analyzed from the saddle-point perspective, as explained in Section~\ref{sec:tdc}.
The other work by~\citeauthor{td-finite:wang:2017}~\citeyear{td-finite:wang:2017} investigates the convergence rate assuming a Markov sampling condition and Robbins-Monro stepsizes.
Table~\ref{tab:comparison} presents a comparison of existing approaches. It should be noted that all of the analyses are for algorithms employing projections and iterative averaging.
\citeauthor{td-finite:Csaba:2018}~\citeyear{td-finite:Csaba:2018} also studied the impact of constant stepsizes and Polyak's iterative averaging on the TD algorithm with projections in the i.i.d setting, yet the result has not been extended to GTD algorithm family yet. 

The second line of work aims to adopt new stochastic saddle-point solvers into the proximal gradient TD framework and proposes new algorithms for acceleration, regularization and variance reduction. \citeauthor{proximalrl}~\citeyear{proximalrl} investigated proximal gradient TD with an $\ell_1$-regularizer to enhance sparsity. 
\citeauthor{gtd:du:2017stochastic}~\citeyear{gtd:du:2017stochastic} introduced the stochastic variance reduced saddle-point solver~\cite{saddlepoint:palaniappan:2016stochastic} to reach a linear convergence rate for a fixed set of samples.
The third line of work focuses on the analysis of gradient temporal difference learning algorithms with eligibility traces~\cite{gtd:lambda:yu:2017}. To the best of our knowledge, the only asymptotic convergence analysis for GTD($\lambda$) with $\lambda>0$ is proposed by~\citeauthor{gtd:lambda:yu:2017}~\citeyear{gtd:lambda:yu:2017}, and the finite-sample analysis still remains an open problem.
Comprehensive studies of different temporal difference learning algorithms with eligibility traces have been conducted in the works of~\citeauthor{dann2014tdsurvey}~\citeyear{dann2014tdsurvey} and~\citeauthor{td:comparison:adam:2016}~\citeyear{td:comparison:adam:2016}.
Last but not least, Legendre-Fenchel duality has also been used in other reinforcement learning problems, such as mean-variance policy search~\cite{liubo:nips:2018}, where the original mean-variance objective function is formulated as a coordinate ascent problem by introducing Legendre-Fenchel duality.

\begin{tiny}
\begin{table}
\centering
\begin{tabular}{|l|l|l|l|}
\hline 
\textbf{Method} 
&\textbf{GTD2~{\tiny \cite{liu2015uai}}} &\textbf{GTD2~{\tiny\cite{td-finite:wang:2017}}} &\textbf{GTD2~{\tiny\cite{td-finite:colt:dalal:2018}}} 
\\\hline\hline
\textbf{Sampling} & i.i.d & Markov & i.i.d
\\\hline 
\textbf{Stepsize's time-scale} & single-time-scale  & single-time-scale  & two-time-scale 
\\\hline 
\textbf{Stepsize choice} & constant  & Robbins-Monro  & 
$\begin{array}{*{20}{c}}
{{\alpha _n} = 1/{n^{1 - \chi }},}\\
{{\beta _n} = 1/{n^{(1 - \chi )\frac{2}{3}}}}
\end{array}$ 
\\\hline 
\textbf{Projection} & Y & Y & Y
\\\hline 
\textbf{Iterative averaging} & Y  & Y & Y
\\\hline 
\textbf{Conv. rate of MSE} &
$\tilde{O}(1/{\sqrt n})$ &
$\tilde O( \frac{\Sigma_{t=1}^n {\alpha_t^2}}{\Sigma_{t=1}^n {\alpha_t}} )$ &
$\tilde O({n^{ - 2(1 - \chi )/3}})$
\\\hline 
\end{tabular}
\caption{Comparison of Existing Methods}
\label{tab:comparison}
\end{table}
\end{tiny}

\section{Control Learning Extension}

In this section, we are going to discuss the control learning extension of the family of proximal gradient algorithms. To this end, we will first present a lemma bridging the connection between {the} forward-view and backward-view perspectives. Then based on {the} GQ~\cite{gq:maei2010} algorithm, we propose the control learning extension of the GTD2-MP algorithm, which is termed as {the} GQ-MP algorithm.


\subsection{Extension to Eligibility Trace}

The $T$ operator looks one-step ahead, {but} it would be beneficial to look multiple steps ahead~\cite{sutton-barto:book}, which gives rise to the \textit{multiple-step Bellman operator} $T^{\lambda}$, {otherwise known} as {the} $\lambda$-weighted Bellman operator~\cite{sutton-barto:book}, which is an 
arithmetic mean of the power series of $T$, i.e., 
\begin{equation}
{T^\lambda } = (1 - \lambda )\sum\nolimits_{i = 0}^\infty  {{\lambda ^i}{T^{i + 1}}} ,\lambda  \in (0,1).
\end{equation}
Correspondingly, the \textit{multiple-step TD error}, also termed as the  $\lambda$-TD error ${\delta ^\lambda }$ w.r.t ${\theta}$ is defined as
\begin{equation}
\mathbb{E}[{\delta ^\lambda }(\theta )]   = {T^\lambda }\hat v - \hat v = {T^\lambda }\Phi \theta  - \Phi \theta .
\label{eq:deltalambda}
\end{equation}
The objective function in Eq.~\eqref{eq:j} is changed accordingly as follows by replacing $T$ with $T^{\lambda}$ , 
\begin{equation}
\label{eq:obj:fv}
{\rm J}(\theta ) = ||{\Phi ^{\top}}\Xi ({T^\lambda }{\hat v } - {\hat v })||_{{M^{ - 1}}}^2 = ||\mathbb{E}[{\rho_i}{\phi _i}\delta _i^\lambda (\theta )]||_{{M^{ - 1}}}^2.
\end{equation}
This is called {the} \emph{forward view} since it calls for looking multiple steps ahead, which is difficult to implement in practice. To this end, the \emph{backward view} using eligibility traces is easy to implement. The eligibility trace is defined in a recursive way as
\begin{align}
e_0 =& 0\\
\nonumber
e_t =& \rho_t \gamma \lambda e_{t-1} + \phi_t.
\end{align}
 We will introduce Theorem~\ref{thm:equiv} to bridge the gap between the backward and forward view. 
\begin{theorem}
\cite{maei2011thesis,geist2014off:trace:jmlr}
There is {an} equivalence between the forward view and backward view such that
\begin{equation}
\mathbb{E}[{\phi _i}\delta _i^\lambda (\theta )] = \mathbb{E}[{e_i}{\delta _i}(\theta )].
\label{eq:equiv}
\end{equation}
\label{thm:equiv}
\end{theorem}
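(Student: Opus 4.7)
The plan is to prove the equality by expanding both sides as infinite sums indexed by the forward/backward horizon $k$ and matching them term by term after invoking stationarity of the sampling distribution.

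First I would rewrite the $\lambda$-TD error in its explicit series form. Starting from $\mathbb{E}[\delta_i^\lambda(\theta)] = T^\lambda\hat v - \hat v$ and using $T^\lambda = (1-\lambda)\sum_{j\ge 0}\lambda^j T^{j+1}$, a standard telescoping argument (the same one used in the on-policy case, but with per-step importance weights to account for the off-policy corrections inside the powers of $T$) gives the pointwise off-policy identity
\begin{equation}
\delta_i^\lambda(\theta) \;=\; \sum_{k=0}^{\infty} (\gamma\lambda)^k \Bigl(\prod_{j=1}^{k} \rho_{i+j}\Bigr)\,\delta_{i+k}(\theta),
\end{equation}
with the empty product convention $\prod_{j=1}^{0}=1$. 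This is the step where the forward-view bookkeeping lives; I would verify it by induction on the number of applications of $T$ together with the importance-weighting identity of Lemma~\ref{lem:e}.

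Next I would unroll the eligibility trace recursion $e_t = \phi_t + \rho_t\gamma\lambda\, e_{t-1}$ to obtain
\begin{equation}
e_t \;=\; \sum_{k=0}^{\infty}(\gamma\lambda)^k\Bigl(\prod_{j=0}^{k-1}\rho_{t-j}\Bigr)\phi_{t-k},
\end{equation}
which is immediate by induction. Multiplying by $\delta_t(\theta)$ and taking expectations yields a sum of terms of the form $(\gamma\lambda)^k\,\mathbb{E}\bigl[\phi_{t-k}\bigl(\prod_{j=0}^{k-1}\rho_{t-j}\bigr)\delta_t(\theta)\bigr]$.

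Now I would take expectation of $\phi_i\delta_i^\lambda(\theta)$ using the series representation above, interchange expectation with the sum (justified by absolute convergence since $\gamma\lambda<1$ and all features, rewards, and importance weights are bounded under Assumption~\ref{ass:bound}), and reindex each term via stationarity of the sampling distribution: under the stationary law the joint distribution of $(s_{i+m},a_{i+m},s_{i+m+1})_{m=0}^{k}$ is shift-invariant, so $\mathbb{E}\bigl[\phi_i\bigl(\prod_{j=1}^{k}\rho_{i+j}\bigr)\delta_{i+k}(\theta)\bigr] = \mathbb{E}\bigl[\phi_{i-k}\bigl(\prod_{j=1}^{k}\rho_{i-k+j}\bigr)\delta_i(\theta)\bigr]$. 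The right-hand side is exactly the $k$th term in the expectation of $e_i\delta_i(\theta)$, so summing over $k$ gives the claim.

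The main obstacle is really the off-policy forward-view expansion in the first step: keeping the per-step importance-weight products aligned with the correct indices so that the shift in the last step lines up exactly with the backward trace's index pattern. The only other delicate point is invoking stationarity to justify the re-indexing, which requires that the $\{s_i\}$ be drawn from the stationary distribution $\xi$ of $\pi_b$ (as assumed in the finite-sample setup); otherwise the equality would only hold asymptotically rather than pointwise in $i$.
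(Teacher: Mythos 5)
You should first note that the paper does not actually prove Theorem~\ref{thm:equiv}: it cites Theorem 11 of Maei's thesis and Proposition 6 of Geist and Scherrer, so there is no in-paper argument to compare against. Your outline is precisely the standard argument behind those cited results --- expand the forward view as a geometric series of one-step TD errors weighted by products of \emph{future} importance ratios, unroll the trace into a series over \emph{past} ratios, and match terms after a stationarity shift --- and your index bookkeeping is consistent with the paper's trace $e_t=\rho_t\gamma\lambda e_{t-1}+\phi_t$: after shifting, the $k$-th forward term $(\gamma\lambda)^k\,\mathbb{E}[\phi_{i-k}\rho_{i-k+1}\cdots\rho_i\,\delta_i(\theta)]$ is exactly the $k$-th term of $\mathbb{E}[e_i\delta_i(\theta)]$. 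Your caveat that the re-indexing needs the samples to form a stationary trajectory of $\pi_b$ (not the i.i.d.\ transitions used elsewhere in the paper) is also correct and worth stating.

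Two points need more care. First, your step 1 is where all the content sits, and it is asserted rather than carried out; worse, taken literally against the paper's definition \eqref{eq:deltalambda} it does not match. Because your product starts at $\rho_{i+1}$, the first transition out of $s_i$ is generated by $\pi_b$ and left uncorrected, so the conditional mean of your series given $s_i$ is a mixed behavior-then-target quantity, not $(T^\lambda\hat v-\hat v)(s_i)$; the identity with $T^\lambda$ is recovered only after multiplying by $\rho_i$, which is how the paper actually uses $\delta_i^\lambda$ in \eqref{eq:obj:fv} and \eqref{eq:equiv_rho}. So you must state explicitly which pointwise definition of $\delta_i^\lambda$ you adopt (yours is the one under which \eqref{eq:equiv} holds with this trace); the ``telescoping plus Lemma~\ref{lem:e}'' induction you sketch will not close against \eqref{eq:deltalambda} as written. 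Second, justifying the sum--expectation interchange by ``$\gamma\lambda<1$ and bounded importance weights'' is insufficient when $\gamma\lambda\rho_{\max}\ge 1$, since the pointwise bound on the $k$-th term is of order $(\gamma\lambda\rho_{\max})^k$ and need not be summable. The correct argument conditions on the state sequence and uses $\mathbb{E}[\rho_t\mid s_t]=1$, so the expected weight products equal one and the expected $k$-th term is $O((\gamma\lambda)^k)$; with that fix the dominated-convergence step, and hence the proof, goes through.
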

The details of the forward view and the backward view can be seen in the book by~\citeauthor{sutton-barto:book}~\citeyear{sutton-barto:book}, Theorem 11 in the work of~\citeauthor{maei2011thesis}~\citeyear{maei2011thesis}, and Proposition 6 in the work of~\citeauthor{geist2014off:trace:jmlr}~\citeyear{geist2014off:trace:jmlr}. A natural extension to Eq.~\eqref{eq:equiv} multiplies the importance ratio factor on both sides of the equality as follows,
\begin{equation}
\mathbb{E}[{\rho_i}{\phi _i}\delta _i^\lambda (\theta )] = \mathbb{E}[{\rho_i}{e_i}{\delta _i}(\theta )].
\label{eq:equiv_rho}
\end{equation}

\subsection{Greedy-GQ($\lambda$) Algorithm}
With the help of Theorem~\ref{thm:equiv}, we can convert the objective formulation in Eq.~\eqref{eq:obj:fv} {to}
\begin{equation}
\label{eq:obj:bv}
J(\theta ) = ||\mathbb{E}[{\rho_i}{e_i}{\delta _i}(\theta )]||_{{M^{ - 1}}}^2.
\end{equation}
{The} corresponding primal-dual formulation is
\begin{equation}
J(\theta ) = \mathop {\max }\limits_y \left( {\left\langle {\mathbb{E}[{\rho_i}{e_i}{\delta _i}(\theta )],y} \right\rangle  - \frac{1}{2}||y||^2_M } \right)
\label{eq:obj:bv:pd}
\end{equation}
and thus the new algorithm can be derived as
\begin{align}
\nonumber
{\theta _{t + 1}} =& {\theta _t} + {\alpha _t}{\rho_t}\Delta {\phi _t}(e_t^{\top}{y_t})\\
{y_{t + 1}}       =& {y_t} + {\alpha _t}\left( {{\rho_t}{\delta _t}{e_t} - {M_t}{y_t}} \right).
\label{eq:gtd:trace}
\end{align}
Correspondingly, Greedy-GQ($\lambda$) with importance sampling can be derived. Here we present the Greedy-GQ($\lambda$) algorithm.
\begin{algorithm}
\caption{Greedy-GQ($\lambda$)}
\label{alg:GQmp} 
\noindent Initialize $e_t = 0$, starting from $s_{0}$.
\begin{algorithmic}[1]
\REPEAT
\STATE Take $a_t$ according to $\pi_b$, and arrive at $s_{t+1}$
\STATE Compute $a_t^* = \arg {\max _a}{\theta ^\top}\phi ({s_t},a)$. If $a_t=a_t^*$, then 
${\rho _t} = \frac{1}{{{\pi _b}({a_t}|{s_t})}}$; otherwise $\rho_t=0$.    
\STATE Compute $\theta_{t+1},y_{t+1}$ according to \textbf{GQ-MP-LEARN} Algorithm.
\STATE Choose action $a_{t}$, and get $s_{t+1}, r_{t+1}$
\STATE Set $t\leftarrow t+1$;
\UNTIL $s_{t}$ is an absorbing state;
\STATE Compute $\bar{\theta}_{t},\bar{y}_{t}$ 
\end{algorithmic}
\end{algorithm}
and \textbf{GQ-MP-LEARN} algorithm, which is the core step of Greedy-GQ algorithm.
%
\begin{algorithm}
\caption{GQ-MP-LEARN}
\label{alg:GQmpLEARN} 
\begin{align*}
\nonumber
{e_t} &= \gamma \lambda \rho_t {e_{t - 1}} + \phi_{t}
\\
\nonumber
{\delta_{t}} &= {r_t} + \theta _t^ \top \Delta {\phi _t}\\
\nonumber
y_t^m &= {y_t} + {\alpha _t}\left( {{\rho_t}{e_t}{\delta _t} - (\phi _t^ \top {y_t}){\phi _t}} \right)\\
\nonumber
\theta _t^m &= {\theta _t} + {\alpha _t}{\rho_t}\Delta {\phi _t}(e_t^{\top}{y_t})\\
\nonumber
\delta _t^m &= {r_t} + \theta _t^{m \top }\Delta {\phi _t}\\
\nonumber
{y_{t + 1}} &= {y_t} + {\alpha _t}\left( {{\rho_t}{e_t}{\delta _t^m} - (\phi _t^ \top {y^m_t}){\phi _t}} \right)\\
\nonumber
\theta _t^m &= {\theta _t} + {\alpha _t}{\rho_t}\Delta {\phi _t}(e_t^{\top}{y^m_t})\\
\end{align*}
\end{algorithm}


\section{Empirical Evaluation}
\label{sec:exp}
In this section, we compare the previous GTD2 method with our proposed GTD2-MP method using various domains with regard to their value function approximation performance. 

\subsection{Baird Domain}
The Baird example \cite{Baird:ResidualAlgorithms1995} is a well-known example to test the performance of off-policy convergent algorithms. 
Constant stepsizes $\alpha = 0.005$ for GTD2 and $\alpha = 0.004$ for GTD2-MP are   chosen via comparison studies as in the work of~\citeauthor{dann2014tdsurvey}~\citeyear{dann2014tdsurvey}.
Figure~\ref{fig:star} shows the MSPBE curve of GTD2, GTD2-MP of $8000$ steps averaged over $200$ runs. We can see that GTD2-MP {gives} a significant improvement over the GTD2 algorithm
wherein both the MSPBE and variance are substantially reduced.
\begin{figure}[h]
\centering
\includegraphics[width= 1\textwidth, height=2.5in]{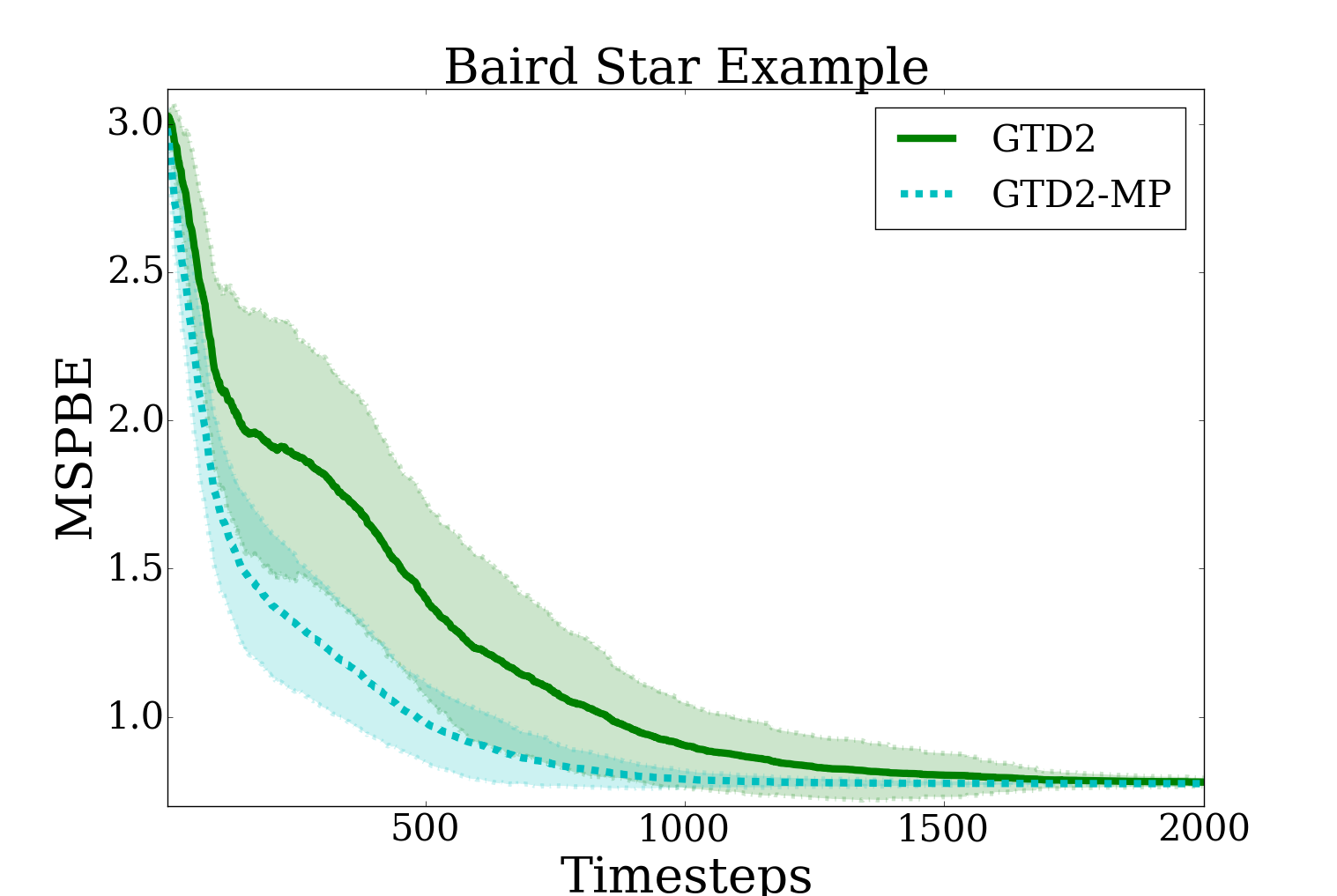}
\caption{Off-Policy Convergence Comparison}
\label{fig:star}
\end{figure}


\subsection{$50$-State Chain Domain}

The 50 state chain~\cite{lagoudakis:jmlr} is a standard MDP domain. There are 50 discrete
states $\{ {s_i}\} _{i = 1}^{50}$ and two actions moving the agent left ${s_i} \to {s_{\max ({{i - 1}},1)}}$ and right ${s_i} \to {s_{\min ({{i + 1}},50)}}$. The actions succeed with probability $0.9$; failed actions move the agent in the opposite direction. The discount factor is $\gamma = 0.9$. The agent receives a reward of $+1$ when in states $s_{10}$ and $s_{41}$. All other states have a reward of $0$.
In this experiment, we compare the performance of the value approximation w.r.t different stepsizes $\alpha  = 0.0001,0.001,0.01,0.1,0.2, \cdots ,0.9$ using the BEBF basis~\cite{bebf:parr:icml07}.  Figure~\ref{fig:chain} shows the value function approximation result where the cyan curve is the true value function, the red dashed curve is the GTD result, and the black curve is the GTD2-MP result. From the figure, one can see that GTD2-MP is much more robust w.r.t. stepsize choice than the GTD2 algorithm.

\begin{figure}
\centering
\includegraphics[width= 0.8\textwidth, height=2.5in]{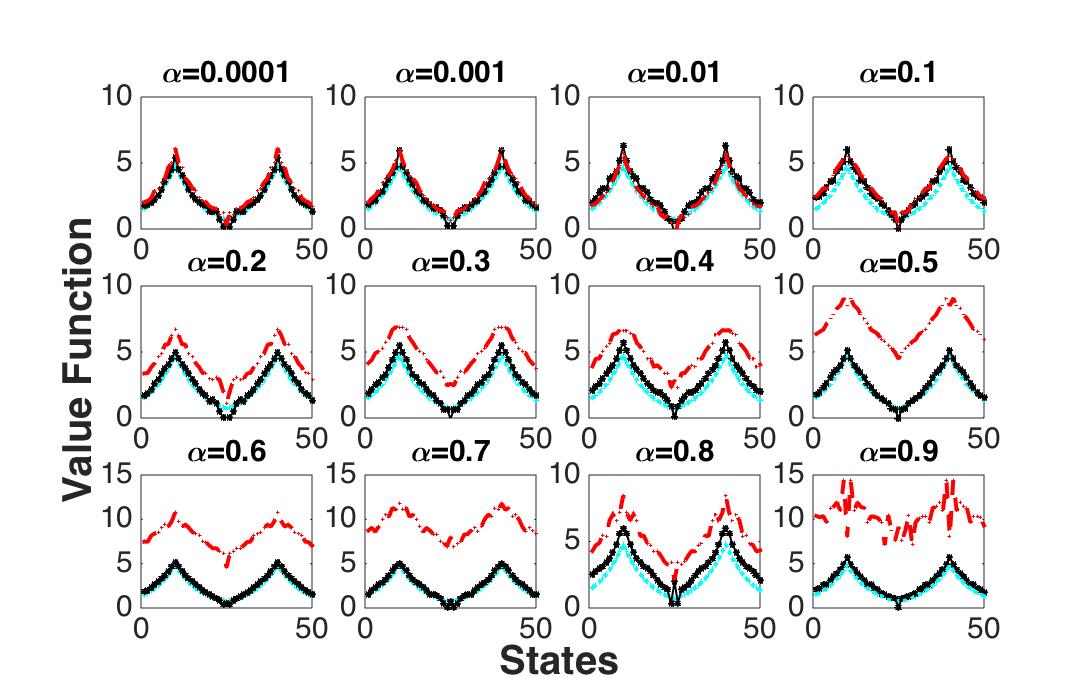}
\caption{Chain Domain}
\label{fig:chain}
\end{figure}


\subsection{Energy Management Domain}
In this experiment, we compare the performance of the algorithms on an energy management domain~\cite{liu2015uai}.
The decision maker must decide how much energy to purchase or sell subject to stochastic prices. This problem is relevant in the context of utilities as well as in settings such as hybrid vehicles. The prices are generated by a Markov chain process. The amount of available storage is limited and degrades with use. The degradation process is based on the physical properties of lithium-ion batteries and discourages fully charging or discharging the battery. The energy arbitrage problem is closely related to the broad class of inventory management problems, with the storage level corresponding to the inventory. However, there are no known results describing the structure of optimal threshold policies in energy storage.

Note that since this is an off-policy evaluation problem, the formulated $A\theta=b$ does not have a solution, and thus the optimal MSPBE($\theta^*$) (resp. MSBE($\theta^*$)) does not reduce to $0$. 
 The result is averaged over $200$ runs,  and $\alpha = 0.001$ for both GTD2 and GTD2-MP is chosen via comparison studies for each algorithm.
As can be seen from Figure~\ref{fig:inv}, GTD2-MP performs much better than  GTD2 in the transient state. Then after reaching the steady state, as can be seen from Table~\ref{tab:inv}, we can see that GTD2-MP reaches a better steady-state solution than the GTD algorithm.
From the steady-state reported in Table~\ref{tab:inv}, we can see that GTD-MP and GTD2-MP usually reach a far better final solution than TD and GTD/GTD2 algorithms.


\begin{figure}
\centering
\begin{minipage}{1\textwidth}
\centering
\includegraphics[width= 1\textwidth, height=2.5in]{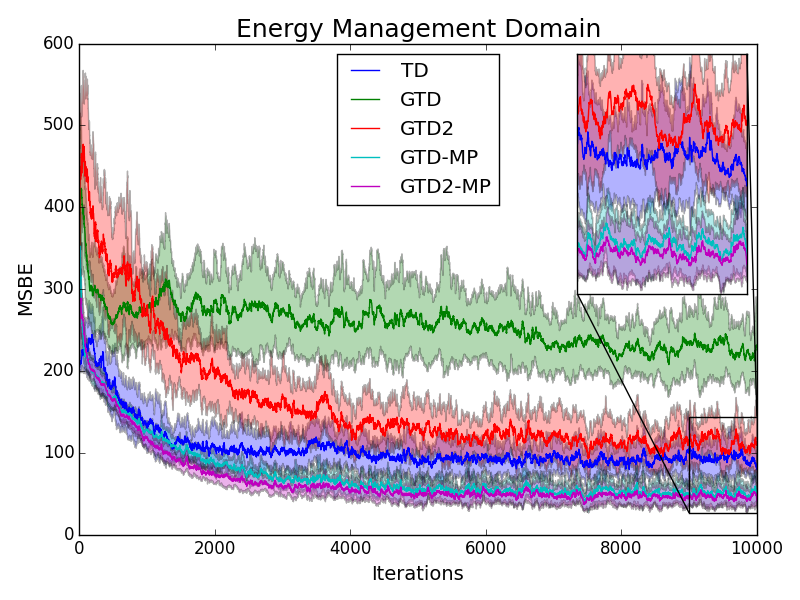}\\
\includegraphics[width= 1\textwidth, height=2.5in]{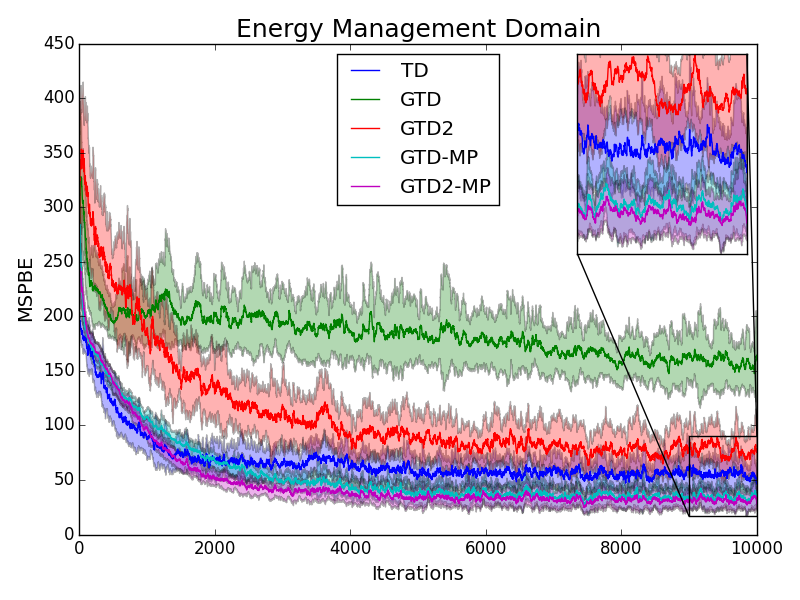}
\end{minipage}
\caption{Energy Management Example}
\label{fig:inv}
\end{figure}

\begin{table}
\centering
\begin{tabular}{|c|c|c|}
\hline 
Algorithm & MSPBE & MSBE        \\
\hline 
TD & $46.743$ & $80.050$      \\
\hline 
GTD & $164.378$ & $231.569$      \\
\hline
GTD2 & $77.139$ & $111.19$      \\
\hline 
GTD-MP & $30.170$ & $44.627$      \\
\hline 
GTD2-MP & $\textbf{27.891}$ & $\textbf{41.028}$    \\
\hline
\end{tabular}
\caption{Steady State Performance Comparison of Battery Management Domain}
\label{tab:inv}
\end{table}


\subsection{Bicycle Balancing and Riding Task}

The bicycle balancing and riding domain~\cite{bicycle:randlov} is a complicated domain. The goal is to learn to balance and ride a bicycle to a target position from the starting location.

To make a fair comparison, the parameter settings are identical to the parameter settings in the work of~\citeauthor{lagoudakis:jmlr}~\citeyear{lagoudakis:jmlr}. The samples are generated via the random walk, {after which, we compare the value function approximation results of TD, GTD2, and GTD2-MP algorithm}.  
From Figure~\ref{fig:bike1}, we can see that both {the GTD2 and GTD2-MP algorithms} reach a much better learning curve than the TD algorithm with significantly reduced variance. Besides, the {GTD2 and GTD2-MP algorithms reach better steady-state solutions} than the TD algorithm, as shown in Table~\ref{tab:bicycle}.

\begin{figure}
\centering
\begin{minipage}{1\textwidth}
\centering
\includegraphics[width= 1\textwidth, height=2.5in]{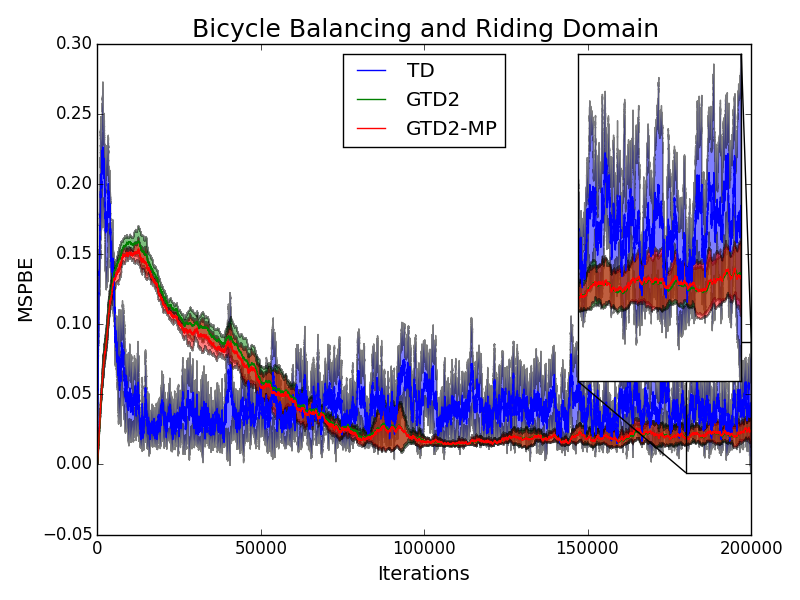}\\
\includegraphics[width= 1\textwidth, height=2.5in]{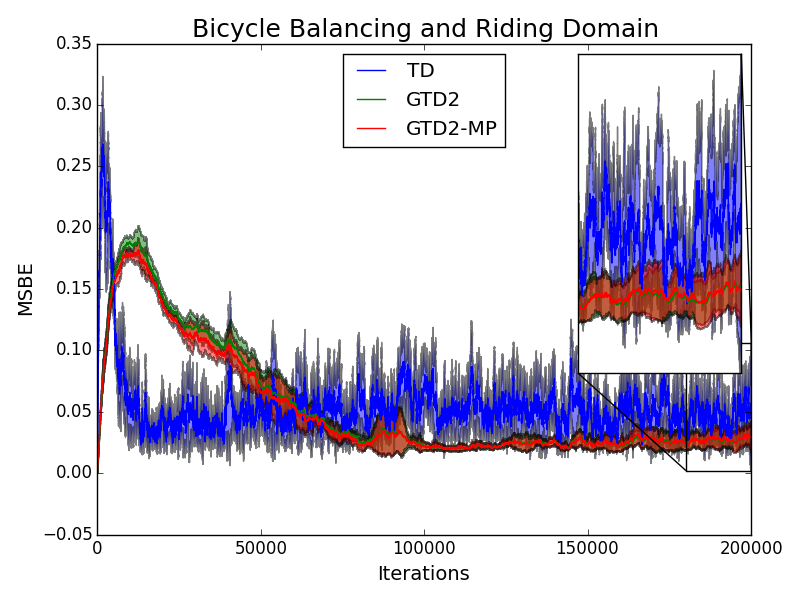}
\end{minipage}
\caption{Energy Management Example}
\label{fig:bike1}
\end{figure}

\begin{table}
\centering
\begin{tabular}{|c|c|c|}
\hline 
Algorithm & MSPBE & MSBE        \\
\hline 
TD & $0.0423$ & $0.0547$      \\
\hline 
GTD2 & $0.0244$ & $0.0300$     \\
\hline 
GTD2-MP & $\textbf{0.0238}$ & $\textbf{0.0297}$    \\
\hline 
\end{tabular}
\caption{Steady State Performance Comparison of Bicycle Domain}
\label{tab:bicycle}
\end{table}

\subsection{Comparison with Other First-Order Policy Evaluation Algorithms}

Here we give an experimental comparison between the gradient-based TD algorithms and the TD algorithm. Based on the experimental results shown above, we make the following empirical conclusions:

\begin{figure}[!ht]
\begin{tabular}{|p{15cm}|}
\hline
\begin{itemize}
\item Of all the gradient-based algorithms, GTD2-MP is the clear winner.
\item For small and medium scale problems, TD is an ideal choice as it converges faster at the initial stage. On the other hand, GTD2-MP  often reaches a better steady-state solution given more number of iterations.
\item For large-scale problems, GTD2-MP is the clear winner over the TD method with both reduced variance and better final solution, as shown in the bicycle and energy management domain.
\item There exist some domains where the $T$ operator is not differentiable, and thus only TD-based algorithms can be applied, such as the optimal stopping problem in~\cite{FPKF:roy}.
\end{itemize}
\\
\hline
\end{tabular}
\caption{Summary of Comparisons between TD and GTD algorithm family}
\label{fig:comp-summary}
\end{figure}

\subsection{Energy Management Domain (Revisited): Control Learning}
Here, we compare the TD, TDC, and GTD-MP variants of GQ-Learning on the battery management domain~\cite{liu2015uai}.  Using the same domain settings as in previous work~\cite{liu2015uai,liu2016ijcai}, we train the three methods on a uniformly random behavior policy for $7,000$ iterations.  We then evaluate the policy, $\theta_t$, learned by each algorithm every $100$ time steps by computing the total reward accumulated by following that policy for $10,000$ iterations, averaged over $10$ runs (see Figure \ref{fig:control_r}).
\begin{figure}
\centering
\begin{minipage}{1\textwidth}
\centering
\includegraphics[width= 1\textwidth, height=2.5in]{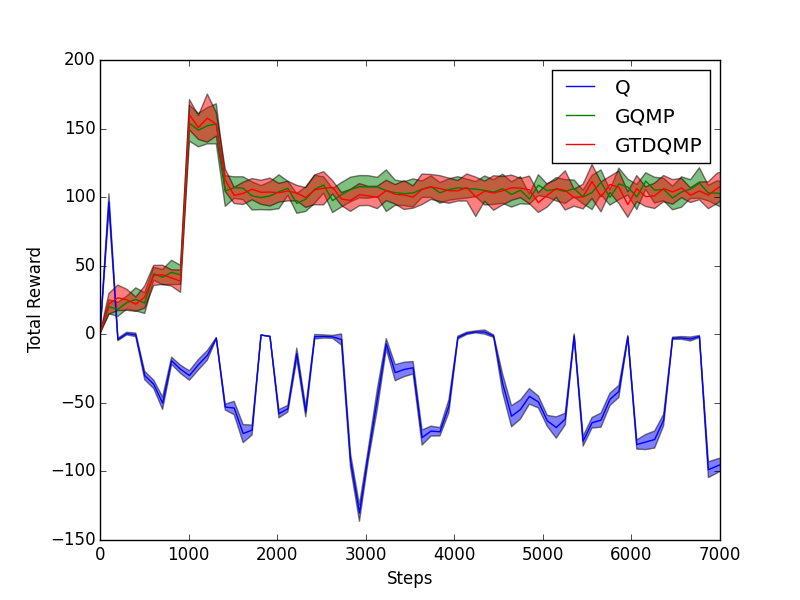}
\end{minipage}
\caption{Average total accumulated reward for learned policies at step $t$.  Shaded regions around mean total reward denote 1 standard deviation.}
\label{fig:control_r}
\end{figure}

All methods were run with $\theta_0$ initialized to the zeros vector.  The TD variant was run with a step size of .0001 to avoid divergence of MSBE while the TDC and GTD-MP variants remained stable in terms of MSBE with a step size of $0.001$ (see Figure \ref{fig:control_msbe}).
\begin{figure}
\centering
\begin{minipage}{1\textwidth}
\centering
\includegraphics[width= 1\textwidth, height=2.5in]{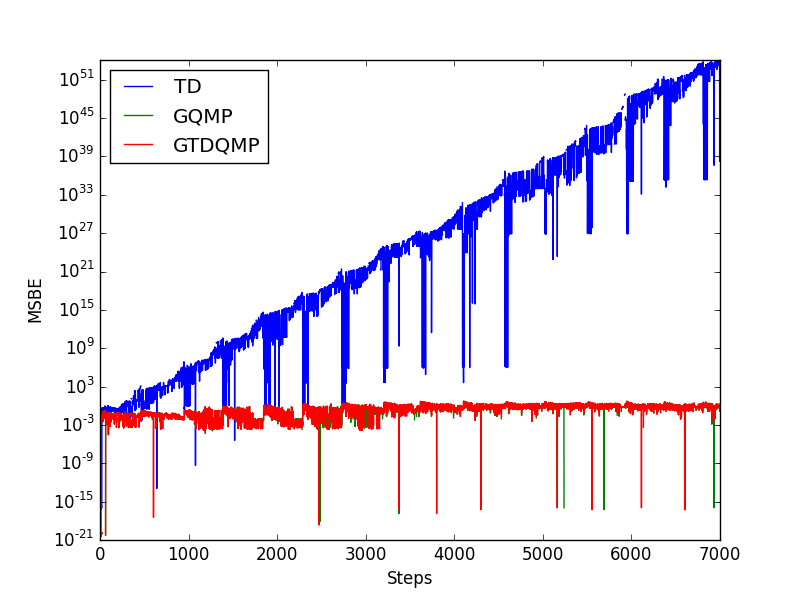}
\end{minipage}
\caption{MSBE diverges for TD variant with step size of $0.001$.  MSBE is plotted on a log scale.}
\label{fig:control_msbe}
\end{figure}


\section{Summary}

In this paper, we showed how gradient TD methods can be shown to be true stochastic gradient methods with respect to a saddle-point primal-dual objective function, which paved the way for the finite-sample analysis of off-policy convergent gradient-based temporal difference learning algorithms such as GTD and GTD2. 
 Both error bound and performance bound are provided, which shows that the value function approximation bound of the GTD algorithms family is $O\left( {\frac{d}{{{n^{1/4}}}}} \right)$. 
 Furthermore, two revised algorithms, namely the projected GTD2 algorithm and the accelerated GTD2-MP algorithm,  are proposed.

There are many interesting directions for future research. Our framework can be easily used to design regularized sparse gradient off-policy TD methods. 

There are several promising future research directions with our proposed proximal gradient TD learning framework. The first promising direction is to explore other compound operator splitting techniques other than primal-dual splitting. As we have shown in previous chapters, new algorithms can be designed if there exist methods that can split the operator so that the product of expectations can be avoided, and this operator splitting formulation does not have to be the primal-dual formulation. We have explored two primal-dual formulations, one is based on the convex conjugate function, and the other is based on dual norm representation. It would be interesting to see if there are any other compound operator splitting techniques that will lead to a family of new algorithms along with possibly faster convergence rate.

Another interesting future direction is to explore proximal gradient TD algorithms with transfer RL. Given multiple different but related tasks, knowledge transfer is desirable and will help faster learning, less sample complexity, and better generalization ability. There are various types of transfer learning at different levels, such as instance-level transfer, feature-level transfer, and parameter-level transfer. As we know, from a transfer learning perspective, off-policy learning is instance level transfer learning. It would be interesting to see if other transfer RL problems can be formulated as saddle-point problems and if there is a similar finite-sample analysis as well.
Another interesting direction is to design new objective functions for TD learning. Since Bellman error is an expectation function (of the TD error), both MSPBE and NEU are (weighted) \textit{norm of expectations} of the TD error. This is where the biased sampling problem comes from. It would be desirable if a new set of objectives can be designed, in which the biased sampling problem can be avoided. How to combine model-free temporal difference learning and learning with a generative model~\cite{chen:2018:scalable} is another interesting question to explore.

\section*{Acknowledgements}
This material is based upon work supported by the  National Science Foundation under Grant Nos. IIS-1216467 and ETRI funds at Auburn University. Ji Liu is in part supported by NSF CCF1718513, IBM faculty award, and NEC fellowship. 
Any opinions, findings, and conclusions or recommendations expressed in this material are those of the authors and do not necessarily reflect the views of the NSF.

\appendix


\section{Proof of Lemma~\ref{lem:abbound}}
\begin{proof}
From the boundedness of the features (by $L$) and the rewards (by ${{\rm{R}}_{\max }}$), we have
\begin{align}
||A||_2 &= ||\mathbb{E}[{\rho _t}{\phi _t}\Delta \phi _t^ \top ]|{|_2} \\
&\le {{\max }_s}||\rho (s)\phi (s){{(\Delta \phi (s))}^ \top }|{|_2} \\
&\le {\rho _{\max }}{{\max }_s}||\phi (s)|{|_2}{{\max }_s}||\phi (s) - \gamma \phi '(s)|{|_2} \\
&\le {\rho _{\max }}{{\max }_s}||\phi (s)|{|_2}{{\max }_s}\left( {||\phi (s)|{|_2} + \gamma ||\phi '(s)|{|_2}} \right) \\
&\le (1 + \gamma ){\rho _{\max }}L^2 d.
\end{align}
The second inequality is obtained by the consistent inequality of matrix norm, the third inequality comes from the triangular norm inequality, and the fourth inequality comes from the vector norm inequality $||\phi (s)|{|_2} \le ||\phi (s)|{|_\infty }\sqrt d  \le L\sqrt d$. The bound on $||b||_2$ can be derived in a similar way as follows.
\begin{align}
||b||_2 &= ||\mathbb{E}[{\rho _t}{\phi _t}r_t ]|{|_2} \\
&\le {\max _s}||\rho (s)\phi (s)r(s)|{|_2} \\
&\le {\rho _{\max }}{{\max }_s}||\phi (s)|{|_2}{{\max }_s}||r (s)|{|_2} \\
&\le {\rho _{\max }}L{{\rm{R}}_{\max }}.
\end{align}
It completes the proof.
\end{proof}

\section{Proof of Proposition~\ref{pro:hp}}

\begin{proof}
The proof of Proposition~\ref{pro:hp} mainly relies on Proposition~3.2 in the work of~\citeauthor{RobustSA:2009}~\citeyear{RobustSA:2009}. We just need to map our convex-concave {\em stochastic} saddle-point problem~\eqref{eq:sp}, i.e., 
\begin{equation}
\mathop{\min}\limits_{\theta\in\Theta}\mathop{\max}\limits_{y\in Y}\left({L(\theta,y)=\left\langle {b-A\theta,y}\right\rangle -\frac{1}{2}||y||_{M}^{2}}\right)
\end{equation}
to the one in Section~3 of that work and show that it satisfies all the conditions necessary for Proposition~3.2. Assumption~\ref{ass:xyfeasible} guarantees that our feasible sets $\Theta$ and $Y$ satisfy the necessary conditions, as they are non-empty bounded closed convex subsets of $\mathbb{R}^d$. We also see that our objective function $L(\theta,y)$ is {\em convex} in $\theta\in\Theta$ and {\em concave} in $y\in Y$, and also {\em Lipschitz continuous} on $\Theta\times Y$. It is known that in the above setting, our saddle-point problem~\eqref{eq:sp} is solvable, i.e.,~the corresponding {\em primal} and {\em dual} optimization problems: $\min_{\theta\in\Theta}\big[\max_{y\in Y}L(\theta,y)\big]$ and $\max_{y\in Y}\big[\min_{\theta\in\Theta}L(\theta,y)\big]$ are solvable with equal optimal values, denoted $L^*$, and pairs $(\theta^*,y^*)$ of optimal solutions to the respective problems from the set of saddle-points of $L(\theta,y)$ on $\Theta\times Y$. 

For our problem, the {\em stochastic sub-gradient vector} $G$ is defined as

\begin{equation}
G(\theta ,y) = \left[ {\begin{array}{*{20}{c}}
{{G_\theta }(\theta ,y)}\\
{ - {G_y}(\theta ,y)}
\end{array}} \right] 
= \left[ {\begin{array}{*{20}{c}}
{ - \hat{A}_t^ \top y}\\
{ - (\hat{b}_t - \hat{A}_t\theta  - \hat{M}_ty)}
\end{array}} \right]. 
\end{equation}
%
This guarantees that the {\em deterministic sub-gradient vector}

\begin{equation}
g(\theta ,y) = \left[ {\begin{array}{*{20}{c}}
{{g_\theta }(\theta ,y)}\\
{ - {g_y}(\theta ,y)}
\end{array}} \right] 
= \left[ {\begin{array}{*{20}{c}}
\mathbb{E}\big[G_\theta(\theta,y)\big] \\
- \mathbb{E}\big[G_y(\theta,y)\big]
\end{array}} \right] 
\end{equation}
%
%
is well-defined, i.e.,~$g_\theta(\theta,y)\in\partial_\theta L(\theta,y)$ and $g_y(\theta,y)\in\partial_y L(\theta,y)$.

We also consider the Euclidean stochastic approximation (E-SA) setting in which the {\em distance generating functions} $\omega_\theta:\Theta\rightarrow\mathbb{R}$ and $\omega_y:Y\rightarrow\mathbb{R}$ are simply defined as
\begin{equation}
\omega_\theta=\frac{1}{2}||\theta||_2^2, \quad\quad \omega_y=\frac{1}{2}||y||_2^2,
\end{equation}
modulus $1$ w.r.t.~$||\cdot||_2$, and thus, $\Theta^o=\Theta$ and $Y^o=Y$ (see pp.~1581~and~1582~in~\citeR{RobustSA:2009}). This allows us to equip the set $Z=\Theta\times Y$ with the distance generating function 
\begin{equation}
\omega(z)=\frac{\omega_\theta(\theta)}{2D_\theta^2}+\frac{\omega_y(y)}{2D_y^2},
\end{equation}
where $D_\theta$ and $D_y$ are defined in Assumption~\ref{ass:xyfeasible}. 

Now that we consider the Euclidean case and set the norms to $\ell_2$-norm, we can compute upper-bounds on the expectation of the dual norm of the stochastic sub-gradients 
\begin{equation}
\mathbb{E}\left[||G_\theta(\theta,y)||^2_{*,\theta}\right] \le M_{*,\theta}^2, \quad \mathbb{E}\left[||G_y(\theta,y)||^2_{*,y}\right] \le M_{*,y}^2,
\end{equation}
where $||\cdot||_{*,\theta}$ and $||\cdot||_{*,y}$ are the dual norms in $\Theta$ and $Y$, respectively. Since we are in the Euclidean setting and use the $\ell_2$-norm, the dual norms are also $\ell_2$-norm, and thus, to compute $M_{*,\theta}$, we need to upper-bound $\mathbb{E}\left[||G_\theta(\theta,y)||_2^2\right]$ and $\mathbb{E}\left[||G_y(\theta,y)||_2^2\right]$. 

To bound these two quantities, we use the following equality that holds for any random variable $x$: 
\begin{equation}
\mathbb{E}[||x||_2^2] =  \mathbb{E}[||x - {\mu _x}||_2^2] + ||{\mu _x}||_2^2,
\end{equation}
where $\mu_x = \mathbb{E}[x]$. Here is how we bound $\mathbb{E}\left[||G_\theta(\theta,y)||_2^2\right]$,
\begin{align}
\nonumber
\mathbb{E}\left[||G_\theta(\theta,y)||_2^2\right] 
&= \mathbb{E}[||\hat A_t^ \top y|{|}^2_2] \\
\nonumber
&= \mathbb{E}[||\hat A_t^ \top y - {A^ \top }y|{|}_2^2] + ||{A^ \top }y|{|}^2_2 \\
\nonumber
&\le {\sigma }^2_2 + {(||A|{|_2}||y|{|_2})^2}\\
&\le {\sigma }^2_2 + ||A|{|}^2_2{R^2},
\end{align}
\noindent where the first inequality is from the definition of $\sigma_2$ in Eq.~\eqref{eq:sigma123} and the consistent inequality of the matrix norm, and the second inequality comes from the boundedness of the feasible sets in Assumption~\ref{ass:xyfeasible}. Similarly we bound $\mathbb{E}\left[||G_y(\theta,y)||_2^2\right]$ as follows:
\begin{align}
\mathbb{E}[||G_y(\theta,y&)||_2^2] = \mathbb{E}[||{{\hat b}_t} - {{\hat A}_t}\theta  - {{\hat M}_t}y||^2_2] \\
&= ||b - A\theta  + My||_2^2 \\
\nonumber
&{  }   + \mathbb{E}[||{{\hat b}_t} - {{\hat A}_t}\theta  - {{\hat M}_t}y - (b - A\theta  - My)||_2^2]\\
&\le {(||b|{|_2} + ||A|{|_2}||\theta |{|_2} + \tau ||y|{|_2})^2} + \sigma _1^2\\
&\le \big(||b||_2 + (||A||_2 + \tau)R\big)^2 + \sigma _1^2,
\end{align}
%
\noindent where these inequalities come from the definition of $\sigma_1$ in Eq.~\eqref{eq:sigma123} and the boundedness of the feasible sets in Assumption~\ref{ass:xyfeasible}. This means that in our case we can compute $M_{*,\theta }^2, M_{*,y}^2$ as
\begin{align}
M_{*,\theta }^2 &= \sigma _2^2 + ||A||_2^2{R^2},\\
M_{*,y}^2 &= \big(||b||_2 + (||A||_2 + \tau)R\big)^2 + \sigma _1^2,
\end{align}
and as a result
\begin{align}
M_*^2 &= 2D_\theta ^2M_{*,\theta }^2 + 2D_y^2M_{*,y}^2 = 2R^2(M_{*,\theta }^2 + M_{*,y}^2) \\
&= R^2\left(\sigma^2 + ||A||_2^2R^2 + \big(||b||_2 + (||A||_2 + \tau)R\big)^2\right)\\
 & \le {\left( {{R^2}\left( {2||A|{|_2} + \tau } \right) + R(\sigma  + ||b|{|_2})} \right)^2},
\end{align}
where the inequality comes from the fact that $\forall a,b,c \ge 0,a^2 + b^2 + c^2 \le (a + b +c)^2$. Thus, we may write $M_*$ as
\begin{equation}
\label{eq:Mstar-def}
M_* = {{R^2}\left( {2||A|{|_2} + \tau } \right) + R(\sigma  + ||b|{|_2})}.
\end{equation}
Now we have all the pieces ready to apply Proposition 3.2 in the work of~\citeauthor{RobustSA:2009}~\citeyear{RobustSA:2009} and obtain a high-probability bound on ${\rm{Err}}({\bar \theta_n},{\bar y_n})$, where $\bar \theta_n$ and $\bar y_n$ (see Eq.~\eqref{eq:bartheta}) are the outputs of the revised GTD algorithm in Algorithm~\ref{alg:pgtd2}. From Proposition 3.2, if we set the step-size in Algorithm~\ref{alg:pgtd2} (our revised GTD algorithm) to $\alpha_t=\frac{2c}{M_*\sqrt{5n}}$, where $c>0$ is a positive constant, $M_*$ is defined by Eq.~\eqref{eq:Mstar-def}, and $n$ is the number of training samples in $\mathcal{D}$, with probability of at least $1-\delta$, we have
\begin{equation}
\label{eq:hp2}
{\rm{Err}}({\bar \theta _n},{\bar y_n}) \le \sqrt {\frac{5}{n}} (8 + 2\log \frac{2}{\delta }){R^2}\left( {2 ||A|{|_2} + \tau + \frac{{||b|{|_2} + \sigma }}{R}} \right).
\end{equation}
%
%
Note that we obtain Eq.~\eqref{eq:hp2} by setting $c=1$ and the ``light-tail'' assumption in Eq.~\eqref{eq:lt}  guarantees that we satisfy the condition in Eq.~(3.16) in the work of~\citeauthor{RobustSA:2009}~\citeyear{RobustSA:2009}, which is necessary for the high-probability bound in their Proposition~3.2~to hold. The proof is complete by replacing $||A||_2$ and $||b||_2$ from Lemma~\ref{lem:abbound}.
\end{proof}

\section{Proof of Proposition~\ref{pro:4} }

\begin{proof}
From Lemma~\ref{lem:v}, we have
\begin{align}
V - {{\bar v}_n} =& \;{(I - \gamma \Pi P)^{ - 1}} \times\\
&\;\big[ {\left( {V - \Pi V} \right) + \Phi {{C}^{ - 1}}(b-A{{\bar{\theta} }_n})} \big].
\end{align}
Applying $\ell_2$-norm w.r.t.~the distribution $\xi$ to both sides of this equation, we obtain
\begin{align}
\label{eq:ccc1}
||V - \bar v_n||_\xi \le &||(I - \gamma \Pi P)^{-1}||_\xi \times \\
&\big(||V - \Pi V||_\xi + ||\Phi C^{-1}(b - A\bar\theta_n)||_\xi\big).  \nonumber
\end{align}
Since $P$ is the kernel matrix of the target policy $\pi$ and $\Pi$ is the orthogonal projection w.r.t.~$\xi$, the stationary distribution of $\pi$, we may write 
\begin{equation}
||{(I - \gamma \Pi P)^{ - 1}}|{|_{\xi} } \le \frac{1}{{1 - \gamma }}.
\end{equation}
Moreover, we may upper-bound the term $||\Phi C^{-1}(b - A\bar\theta_n)||_\xi$ in Eq.~\eqref{eq:ccc1} using the following inequalities: 
\begin{align}
||\Phi {C^{ - 1}}(b &- A{{\bar \theta }_n})|{|_\xi } \\ 
&\le ||\Phi {C^{ - 1}}(b - A{{\bar \theta }_n})|{|_2}\sqrt {{\xi _{\max }}} \\ 
&\le ||\Phi |{|_2}||{C^{ - 1}}|{|_2}||(b - A{{\bar \theta }_n})|{|_{{M^{ - 1}}}}\sqrt {\tau {\xi _{\max }}} \\
&\le (L\sqrt d )(\frac{1}{\nu })\sqrt {2{\rm{Err}}({{\bar \theta }_n},{{\bar y}_n})} \sqrt {\tau {\xi _{\max }}} \\
&= {\frac{L}{\nu }\sqrt {2d\tau {\xi _{\max }}{\rm{Err}}({{\bar \theta }_n},{{\bar y}_n})} },
\end{align}
where the third inequality is the result of upper-bounding $||(b - A\bar \theta_n)||_M^{-1}$ using Eq.~\eqref{eq:err3} and the fact that $\nu  = 1/||{C^{ - 1}}||{^2_2} = 1/{\lambda _{\max }}({C^{ - 1}}) = {\lambda _{\min }}(C)$ ($\nu$ is the smallest eigenvalue of the covariance matrix $C$).
\end{proof}


\section{Proof of Proposition~\ref{pro:5} }

\begin{proof}
Using the triangle inequality, we may write 
\begin{equation}
||V-{\bar v_n}|||_\xi \le || {\bar v_n} - \Phi {\theta ^*}|{|_{\xi} } + ||V-\Phi {\theta ^*}||_\xi. 
\label{eq:errdecomp}
\end{equation} 
The second term on the right-hand side of Eq.~\eqref{eq:errdecomp} can be upper-bounded by Lemma~\ref{lem:kolter}. Now we upper-bound the first term as follows: 
%
%
\begin{align}
|| {{\bar v}_n} &- \Phi \theta^*||_{\xi} ^2\\
 &  = ||\Phi {\bar \theta _n} - \Phi {\theta ^*}||_{\xi} ^2 \\
 &  =  ||{{\bar \theta }_n} - {\theta ^*}||_C^2\\
 & \le ||{{\bar \theta }_n} - {\theta ^*}||_{{A^ \top }{M^{ - 1}}A}^2||{({A^ \top }{M^{ - 1}}A)^{ - 1}}|{|_2}||C|{|_2}\\
 & = ||A({{\bar \theta }_n} - {\theta ^*})||_{{M^{ - 1}}}^2||{({A^ \top }{M^{ - 1}}A)^{ - 1}}|{|_2}||C|{|_2}\\
 &  = ||A{{\bar \theta }_n} - b||_{{M^{ - 1}}}^2\frac{{{\tau _C}}}{{{\sigma _{\min }}({A^ \top }{M^{ - 1}}A)}},
\end{align}
where ${\tau _C} = {\sigma _{\max }}(C)$ is the largest singular value of $C$, and ${\sigma _{\min }}({A^ \top }{M^{ - 1}}A)$ is the smallest singular value of ${{A^ \top }{M^{ - 1}}A}$. Using the result of Theorem~\ref{thm:1}, with probability at least $1-\delta$, we have
\begin{align}
\frac{1}{2}||A{\bar \theta _n} &- b||_{M^{-1}}^2 \le  \tau {\xi _{\max }}{\rm{Err}}({\bar \theta _n},{\bar y_n}).
\label{eq:thm1variant}
\end{align}
Thus,
\begin{align}
|| {\bar v_n} - \Phi \theta^*||_{\xi} ^2  
& \le \frac{{2{\tau _C}\tau {\xi _{\max }}}}{{{\sigma _{\min }}({A^ \top }{M^{ - 1}}A)}}{\rm{Err}}({\bar \theta _n},{\bar y_n})
\label{eq:barvnstar}
\end{align}
From Eqs.~\eqref{eq:errdecomp},~\eqref{eq:kolter2}, and~\eqref{eq:barvnstar}, the result of Eq.~\eqref{eq:pro5} can be derived, which completes the proof.
\end{proof}

\section{Battery Domain}

The problem represents an energy arbitrage model with multiple finite \emph{known} price levels and a stochastic evolution given a limited storage capacity. In particular, the storage is assumed to be an electrical battery that degrades when energy is stored or retrieved.  Energy prices are governed by a Markov process with states $\Theta$. There are two energy prices in each time step: $p^i : \Theta \rightarrow \mathbb{R}$ is the purchase (or input) price and $p^o: \Theta \rightarrow \mathbb{R}$ is the sell (or output) price. The parameter $\theta$ vary between 0 and 10 and their evolution is governed by a martingale with a normal distribution around the mean.

We use $s$ to denote the available battery capacity with $s_0$ denoting the initial capacity. The current state of charge is denoted by $x$ or $y$ and must satisfy that $0 \le x_t \le s_t$ at any time step $t$. The action is the amount of energy to charge or discharge, which is denoted by $u$. Positive $u$ indicates that energy is purchased to charge the battery; negative $u$ indicates the sale of energy. 

The battery storage degrades with use. The degradation is a function of the battery capacity when charged or discharged. We use a general model of battery degradation with a specific focus on Li-ion batteries. The degradation function $d(x,u) \in \mathbb{R}$ represent the battery capacity loss after starting at the state of charge $x \ge 0$ and charging (discharging if negative) by $u$ with $-x \le u \le s_0$.
This function indicates the loss of capacity, such that:
\[ s_{t+1} = s_t - d(x_t,u_t) \]

The state set in the Markov decision problem is composed of $(x,s,\theta)$ where $x$ is the state of charge, $s$ is the battery capacity, and $\theta\in\Theta$ is the state of the price process. The available actions in a state $(x,s,\theta)$ are $u$ such that $-x \le u \le s-x$. The transition is from $(x_t,s_t,\theta_t)$ to $(x_{t+1},s_{t+1},\theta_{t+1})$ given action $u_t$ is:
\begin{align}
x_{t+1} &= x_t + u_t \\
s_{t+1} &= s_t - d(x_t,u_t)
\end{align}
The probability of this transition is given by $P[\theta_{t+1} \vert \theta_t]$. The reward for this transition is:
\[ r((x_t,s_t,\theta_t), u_t) = \begin{cases} 
        - u_t \cdot p^i - c^d \cdot d(x_t,u_t) &\text{if } u_t \ge 0 \\ 
        - u_t \cdot p^o - c^d \cdot d(x_t,u_t) &\text{if } u_t < 0 \end{cases}.\]
That is, the reward captures the monetary value of the transaction minus a penalty for degradation of the battery. Here, $c^d$ represents the cost of a unit of lost battery capacity.

The Bellman optimality equations for this problem are:
\begin{equation}
\label{eq:bellman_optimality_simple} 
\begin{aligned}
q_T(x,s,\theta) &= 0 \\
v_t(x,s,\theta_t) &= \min \bigl\{p^i_{\theta_t} \pos{u} + p^o_{\theta_t} \negt{u}  +  \\
        & \quad + c^d \, d(x,u) +  \\
        &\quad +q_t(x + u,s- d(x,u),\theta_t) :  \\
        &\qquad :  u \in [ -x, s - x ] \bigr\}   \\
q_t(x , s, \theta_t) &= \gamma \cdot \operatorname{E}[v_{t+1}( x,s,\theta_{t+1}) ]
\end{aligned}
\end{equation}
where the expectation $\operatorname{E}[v_{t+1}( x,s,\theta_{t+1}) ]$ is taken over $P(\theta_{t+1} | \theta_t)$. 

The value function is approximated using piece-wise linear features of three types $\phi^1$, $\phi^2$, $\phi^3$ defined as a function of the MDP state as follows:
\begin{align}
\phi^1_{w,q} (x,s,\theta) &= \begin{cases} \pos{x - w} &\text{if } \theta = q \\ 0 &\text{otherwise} \end{cases} \\
\phi^2_{w,q} (x,s,\theta) &= \begin{cases} \pos{s - w} &\text{if } \theta = q \\ 0 &\text{otherwise} \end{cases} \\
\phi^3_{w,q} (x,s,\theta) &= \begin{cases} \pos{s + x - w} &\text{if } \theta = q \\ 0 &\text{otherwise} \end{cases}
\end{align}
Here, $w \in \{0,0.1,\ldots,0.9,1\}$ and $q\in\Theta$.

These features can be conveniently used to approximate a piece-wise linear function.

\bibliography{thesisbib}
\bibliographystyle{theapa}
\end{document}